\documentclass[10pt,twocolumn,letterpaper]{article}

\usepackage{cvpr}              %
\usepackage[accsupp]{axessibility}

\usepackage[dvipsnames]{xcolor}

\definecolor{cvprblue}{rgb}{0.21,0.49,0.74}
\usepackage[pagebackref,breaklinks,colorlinks,citecolor=cvprblue]{hyperref}

\usepackage{url}
\usepackage[utf8]{inputenc}
\usepackage{booktabs} %
\usepackage{amsfonts} %
\usepackage{nicefrac} %
\usepackage{microtype} %
\usepackage{natbib}
\usepackage{gensymb}
\usepackage{color}
\usepackage{caption}
\usepackage{subcaption}
\usepackage{amsthm}
\usepackage{mathrsfs}
\usepackage{amsmath}
\usepackage{amssymb}
\usepackage{mathtools}
\usepackage{wrapfig}
\usepackage{soul}
\usepackage[ruled,longend, algo2e]{algorithm2e}

\usepackage{graphicx}
\usepackage[dvipsnames]{xcolor}
\usepackage{svg}
\usepackage{diagbox}
\usepackage{algorithm}
\usepackage{algorithmic}
\usepackage{hhline}
\usepackage{multirow}
\usepackage{wrapfig,lipsum,booktabs}

\newcommand{\Mat}{\boldsymbol}

\newcommand{\real}{\mathbb{R}}

\DeclareMathOperator{\mean}{\mathbb{E}}
\DeclareMathOperator{\KL}{\mathcal{D}_{KL}}
\DeclareMathOperator{\gauss}{\mathcal{N}}
\DeclareMathOperator{\uniform}{\mathcal{U}}

\theoremstyle{plain}
\newtheorem{theorem}{Theorem}

\newtheorem{lemma}{Lemma}

\theoremstyle{definition}

\theoremstyle{remark}
\newtheorem{remark}{Remark}

\title{Taming Mode Collapse in Score Distillation for Text-to-3D Generation}

\author{Peihao Wang\textsuperscript{1}\thanks{Work done during an internship with Meta.}, Dejia Xu\textsuperscript{1}, Zhiwen Fan\textsuperscript{1}, Dilin Wang\textsuperscript{2}, Sreyas Mohan\textsuperscript{2}, Forrest Iandola\textsuperscript{2}, \\
Rakesh Ranjan\textsuperscript{2}, Yilei Li\textsuperscript{2}, Qiang Liu\textsuperscript{1}, Zhangyang Wang\textsuperscript{1}, Vikas Chandra\textsuperscript{2} \\
\textsuperscript{1}The University of Texas at Austin, \textsuperscript{2}Meta Reality Labs\\
{\tt\small\{peihaowang, dejia, zhiwenfan, atlaswang\}@utexas.edu, lqiang@cs.utexas.edu} \\
{\tt\small \{wdilin, sreyasmohan, fni, rakeshr, yileil, vchandra\}@meta.com}\\
\tt\small \href{https://vita-group.github.io/3D-Mode-Collapse/}{vita-group.github.io/3D-Mode-Collapse/}
}

\begin{document}
\maketitle

\begin{abstract}
Despite the remarkable performance of score distillation in text-to-3D generation, such techniques notoriously suffer from view inconsistency issues, also known as ``Janus" artifact, where the generated objects fake each view with multiple front faces.
Although empirically effective methods have approached this problem via score debiasing or prompt engineering, a more rigorous perspective to explain and tackle this problem remains elusive.
In this paper, we reveal that the existing score distillation-based text-to-3D generation frameworks degenerate to maximal likelihood seeking on each view independently and thus suffer from the mode collapse problem, manifesting as the Janus artifact in practice.
To tame mode collapse, we improve score distillation by re-establishing the entropy term in the corresponding variational objective, which is applied to the distribution of rendered images. Maximizing the entropy encourages diversity among different views in generated 3D assets, thereby mitigating the Janus problem.
Based on this new objective, we derive a new update rule for 3D score distillation, dubbed \textit{Entropic Score Distillation (ESD)}.
We theoretically reveal that ESD can be simplified and implemented by just adopting the classifier-free guidance trick upon variational score distillation.
Although embarrassingly straightforward, our extensive experiments demonstrate that ESD can be an effective treatment for Janus artifacts in score distillation.
\end{abstract}

\section{Introduction}

Recent advancements in text-to-3D technology have attracted considerable attention, particularly for its pivotal role in automating high-quality 3D content. This is especially crucial in fields such as virtual reality  and gaming, where 3D content forms the bedrock. While numerous techniques are available, the prevailing text-to-3D approach is based on score distillation~\citep{poole2022dreamfusion}, popularized by DreamFusion and its follow-up works \citep{wang2023score, lin2023magic3d, chen2023fantasia3d, tsalicoglou2023textmesh, metzer2023latent, wang2023prolificdreamer}.

Score distillation leverages a pre-trained 2D diffusion model to sample over the 3D parameter space (\ie Neural Radiance Fields (NeRF)~\citep{mildenhall2020nerf}) such that views rendered from a random angle satisfy the statistics of the image distribution.
This algorithm is implemented by backpropagating the estimated score of each view via the chain rule.
Despite the notable progress achieved with score distillation-based approaches, it is widely observed that 3D content generated using score distillation suffers from the \emph{Janus} problem \citep{hong2023debiasing}, referring to the artifacts that generated 3D objects contain multiple canonical views (see Fig. \ref{fig:teaser}). 

To understand this drawback of score distillation, we draw the theoretical connection between the Janus problem and \emph{mode collapse}, a statistical term describing a distribution concentrating on the high-density area while losing information about the probability tail.
We first uncover that the optimization of existing score distillation-based text-to-3D generation degenerates to a maximum likelihood objective, making it susceptible to model collapse.
As pre-trained diffusion models are biased to frequently encountered views~\citep{hong2023debiasing}\footnote{For example, it is common that a frontal view of a cat is more likely to be sampled from latent diffusion models than the back view.}, this oversight leads all views opt to convergence toward the point with the highest likelihood, manifesting as the Janus artifact in practical applications.
The main limitation of current methods is that their distillation objectives solely maximize the likelihood of each view independently, without considering the diversity between different views.

\begin{figure*}[t]
    \centering
    \includegraphics[width=0.9\linewidth]{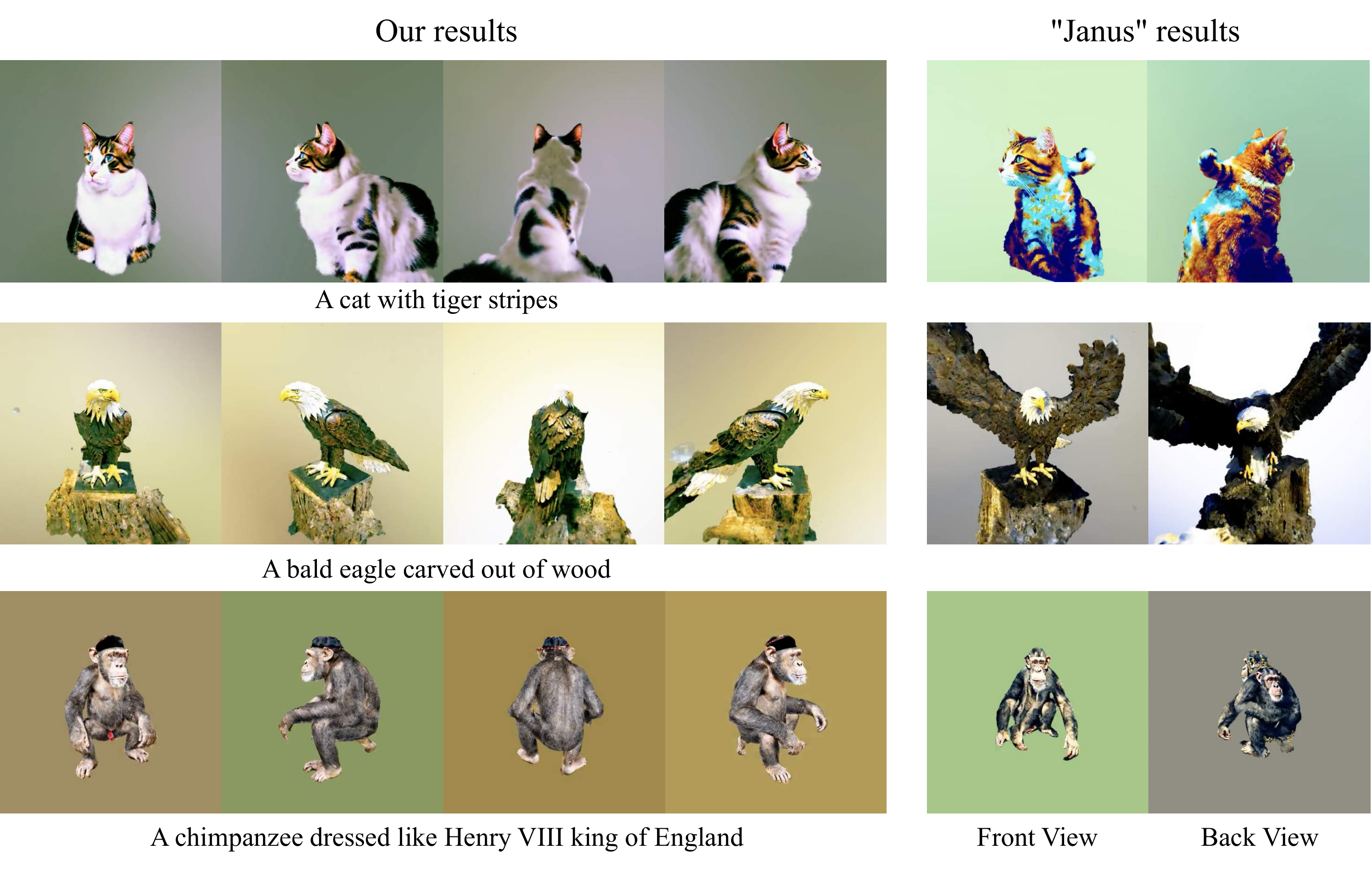}
    \vspace{-1em}
    \caption{\small \textbf{A Preview of Qualitative Results.} We present the front and back views of objects synthesized by VSD (ProlificDreamer) on the right two columns, and four views of our generated results on the left. VSD suffers from ``Janus'' problem, where both front and back views contain a frontal face of the targeted object, while our method effectively mitigates this artifact. Please refer to more results in Appendix \ref{sec:more_vis_res}.}
    \label{fig:teaser}
    \vspace{-1em}
\end{figure*}

To address the aforementioned issue, we propose a principled approach \emph{Entropic Score Distillation} (ESD), which regularizes the score distillation process by entropy maximization of the rendered image distribution, thereby enhancing the diversity of views in generated 3D assets and alleviating the Janus problem. 
Our derived ESD update admits a simple form as a weighted combination of scores for pre-trained image distribution and rendered image distribution. 
Compared with Score Distillation Sampling (SDS)~\citep{poole2022dreamfusion}, our ESD involves the score of the rendered image distribution, serving to maximize the entropy of the rendered image distribution. 
Unlike Variational Score Distillation (VSD)~\citep{wang2023prolificdreamer}, the learned score function of the rendered image distribution does not depend on the camera pose.
This subtle difference has a more profound impact, as we show the score function of rendered images modeled by VSD corresponds to an objective with fixed entropy, thereby having no influence on view variety.
In contrast, ESD optimizes for a Kullback-Leibler divergence with a non-constant entropy term parameterized by the 3D model, leading to an effect that encourages diversity among different views.

In practice, we find it challenging to optimize the score of the rendered image distribution without conditioning on the camera pose. To facilitate training, we discover that the gradient from the entropy can be decomposed into a combination of scores: one depends on the camera pose, and the other independent of it, with a coefficient interacting between these two terms.
Through this theoretical establishment, we are able to adopt a handy implementation of ESD by Classifier Free Guidance (CFG) trick \cite{ho2022classifier} where conditional and unconditional scores are trained alternatively and mixed during inference.

Through extensive experiments with our proposed ESD, we demonstrate its efficacy in alleviating the Janus problem and its significant advantages in improving 3D generation quality when compared to the baseline methods~\citep{poole2022dreamfusion,wang2023prolificdreamer} and other remedy techniques \citep{hong2023debiasing, armandpour2023re}.
As a side contribution, we also borrow two inception scores \cite{salimans2016improved} to evaluate text-to-3D results and numerically probe model collapse in score distillation. We show these two metrics can effectively characterize the quality and diversity of views, highly matching our qualitative observations.

\begin{figure*}[t]
    \centering
    \includegraphics[width=\textwidth]{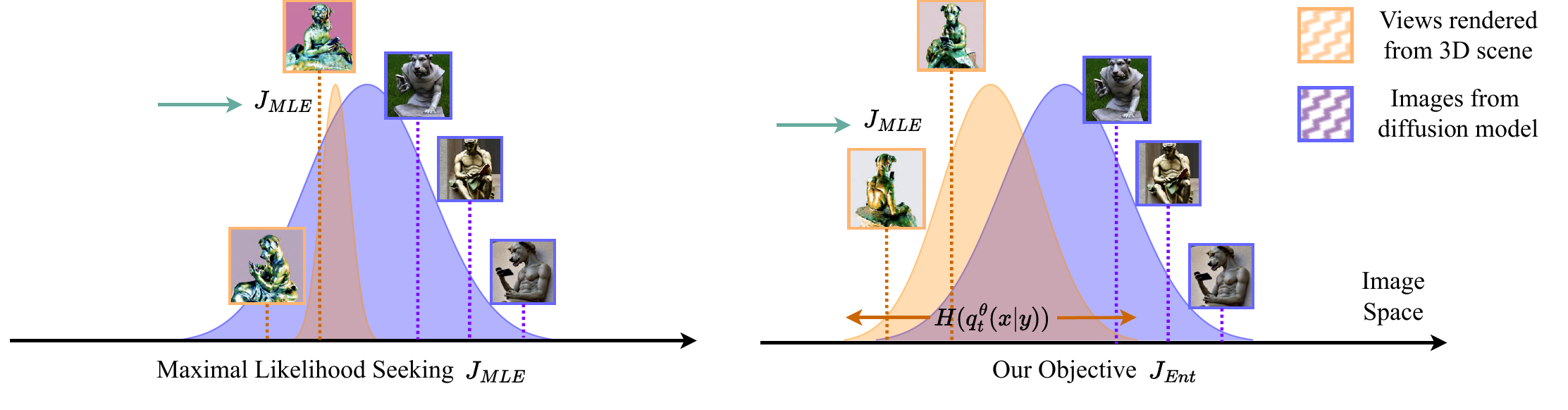}
    \vspace{-2.5em}
    \caption{\small \textbf{Illustration of the effect of entropy regularization.} Learned image distributions often exhibit a higher probability mass for objects' frontal faces. Pure maximal likelihood seeking is opt to mode collapse (Sec. \ref{sec:mode_collapse}). Adding entropy regularization can expand the support of fitted distribution $q^{\Mat{\theta}}_t(\Mat{x} | \Mat{y})$ with mode-covering behavior (Sec. \ref{sec:method}).}
    \label{fig:mle_vs_kl}
    \vspace{-1em}
\end{figure*}

\section{Background}
\label{sec:prelim}

\subsection{Diffusion Models}
Diffusion models, as demonstrated by various works \citep{sohl2015deep, ho2020denoising, song2019generative, song2020score}, have shown to be highly effective in text-to-image generation.
Technically, a diffusion model learns to gradually transform a normal distribution $\gauss(\Mat{0}, \Mat{I})$ to the target distribution $p_{data}(\Mat{x} | \Mat{y})$ where $\Mat{y}$ denotes the text prompt embeddings.
The sampling trajectory is determined by a forward process with the conditional probability $p_t(\Mat{x}_t | \Mat{x}_0) = \gauss(\Mat{x}_t | \alpha_t \Mat{x}_0, \sigma_t^2 \Mat{I})$, where $\Mat{x}_t \in \real^D$ represents the sample at time $t \in [0, T]$, and $\alpha_t, \sigma_t > 0$ are time-dependent diffusion coefficients.
Consequently, the distribution at time $t$ can be formulated as $p_t(\Mat{x}_t | \Mat{y}) = \int p_{data}(\Mat{x}_0 | \Mat{y}) \gauss(\Mat{x}_t | \alpha_t \Mat{x}_0, \sigma_t^2 \Mat{I}) d \Mat{x}_0$.
Diffusion models generate samples through a reverse process starting from Gaussian noises, which can be described by the ODE: $\mathrm{d} \Mat{x}_t / \mathrm{d}t = -\nabla_{\Mat{x}} \log p_t(\Mat{x}_t)$ with the boundary condition $\Mat{x}_T \sim \gauss(\Mat{0}, \Mat{I})$ \citep{song2020score, song2020denoising, liu2023genphys}.
Such a process requires the computation of \textit{score function} $\nabla_{\Mat{x}} \log p_t(\Mat{x}_t)$ which is often obtained by fitting a time-conditioned noise estimator $\Mat{\epsilon_{\phi}}: \real^D \rightarrow \real^D$ using score matching loss \citep{hyvarinen2005estimation, vincent2011connection, song2020sliced}.

\subsection{Text-to-3D Score Distillation}
Score distillation based 3D asset generation requires representing 3D scenes as learnable parameters $\Mat{\theta} \in \real^{N}$ equipped with a differentiable renderer $g(\Mat{\theta}, \Mat{c}): \real^{N} \rightarrow \real^{D}$ that projects 3D scene $\Mat{\theta}$ into images with respect to the camera pose $\Mat{c}$. Here $N, D$ are the dimensions of the 3D parameter space and rendered images, respectively.
Neural radiance fields (NeRF) \citep{mildenhall2020nerf} are often employed as the underlying 3D representation for its capability of modeling complex scenes.

Recent works \citep{poole2022dreamfusion, wang2023score, lin2023magic3d, chen2023fantasia3d, tsalicoglou2023textmesh, metzer2023latent, wang2023prolificdreamer, huang2023dreamtime, wang2023steindreamer} demonstrate the feasibility of using a pretrained 2D diffusion model to guide 3D object creation. 
Below, we elaborate on two score distillation schemes, adopted therein: \emph{Score Distillation Sampling} (SDS)~\citep{poole2022dreamfusion} and \emph{Variational Score Distillation} (VSD)~\citep{wang2023prolificdreamer}.

\paragraph{Score Distillation Sampling (SDS).}
SDS updates the 3D parameter $\Mat{\theta}$ as follows \footnote{Without special specification, expectations are taken over all relevant random variables and Jacobian matrices are transposed by default.}:
\begin{align} \label{eqn:sds}
\nabla_{\Mat{\theta}} J_{SDS}(\Mat{\theta}) = -\mean \left[ \omega(t) \frac{\partial g(\Mat{\theta}, \Mat{c})}{\partial \Mat{\theta}} \left(\sigma_t \nabla \log p_t(\Mat{x}_t | \Mat{y}) - \Mat{\epsilon}\right) \right],
\end{align}
where the expectation is taken over timestep $t \sim \uniform[0, T]$, Gaussian noises $\Mat{\epsilon} \sim \gauss(\Mat{0}, \Mat{I})$, and camera pose $\Mat{c} \sim p_c(\Mat{c})$.
Here is $\nabla \log p$ is a pre-trained diffusion model $\Mat{\epsilon_{\phi}}(\Mat{x}, t, \Mat{y})$ and $\Mat{x}_t$ is a noisy version of the rendering given by camera pose $\Mat{c}$. 
$\Mat{x}_t = \alpha_t g(\Mat{\theta}, \Mat{c}) + \sigma_t \Mat{\epsilon}$. 
Updating $\Mat{\theta}$ as in Eq.~\eqref{eqn:sds} has been shown to minimize the evidence lower bound (ELBO) for the rendered images, see~\citet{wang2023score, xu2022neurallift}.

\paragraph{Variational Score Distillation (VSD).}
VSD~\citep{wang2023prolificdreamer} is introduced in ProlificDreamer, VSD improves upon SDS by deriving the following Wasserstein gradient flow \citep{villani2009optimal}: 
\begin{align} \label{eqn:vsd}
\nabla_{\Mat{\theta}}J_{VSD}(\Mat{\theta}) = &-\mean \left[ \omega(t) \frac{\partial g(\Mat{\theta}, \Mat{c})}{\partial \Mat{\theta}} \left(\sigma_t \nabla\log p_t(\Mat{x}_t | \Mat{y}) \right. \right. \nonumber \\
& \hspace{4em} \left. \left. - \sigma_t \nabla\log q_t(\Mat{x}_t | \Mat{c}) \vphantom{)} \right) \vphantom{\frac{\partial g(\Mat{\theta}, \Mat{c})}{\partial \Mat{\theta}}} \right].
\end{align}
Similarly, $\Mat{x}_t = \alpha_t g(\Mat{\theta}, \Mat{c}) + \sigma_t \Mat{\epsilon}$ is the noisy observation of the rendered image.
In contrast to SDS, VSD introduces a new score function of the noisy rendered images conditioned on the camera pose $\mathbf{c}$. To obtain this score, \citet{wang2023prolificdreamer} fine-tunes a diffusion model using images rendered from the 3D scene as follows:
\begin{align} \label{eqn:lora}
\min_{\Mat{\psi}} \mean \left[ \omega(t) \lVert \Mat{\epsilon_{\psi}}(\alpha_t g(\Mat{\theta}, \Mat{c}) + \sigma_t \Mat{\epsilon}, t, \Mat{c}, \Mat{y}) - \Mat{\epsilon} \rVert_2^2 \right],
\end{align}
where $\Mat{\epsilon_{\psi}}(\Mat{x}, t, \Mat{c}, \Mat{y})$ is the noise estimator
of $\nabla\log q_t(\Mat{x}_t | \Mat{c})$ as in diffusion models. 
As proposed in ProlificDreamer, 
$\Mat{\psi}$ is parameterized by LoRA~\citep{hu2021lora} and initialized from a pre-trained diffusion model same as $\nabla\log p_t$.

\section{Revealing Mode Collapse in Score Distillation}
\label{sec:mode_collapse}

Despite the remarkable performance of SDS and VSD in 3D asset generation, it is widely observed that the synthesized objects suffer from ``\textit{Janus}" artifacts.
Janus artifacts refer to the generated 3D scene containing multiple canonical views (the most representative perspective of the object such as the frontal face).
In earlier works, \citet{hong2023debiasing} and \citet{huang2023dreamtime} attribute this problem to unimodality of the learned 2D image distribution since the training data for the diffusion models are naturally biased to the most commonly seen views per each category.
In this section, we examine extant distillation schemes from a statistical view, which has been overlooked in previous literature.

In principle, natural 2D images can be seen as random projections of 3D scenes.
Score distillation matches the image distribution generated by randomly sampled views with a text-conditioned image distribution to recover the underlying 3D representation.
Hence, Janus artifact, in which each view becomes uniform and identical to the most commonly seen views, can be interpreted as a manifestation of distribution collapse to samples within the high-density region.
Such distribution degeneration essentially corresponds to the statistical phenomenon \textit{mode collapse}, which happens when an optimized distribution fails to characterize the data diversity and concentrates on a single type of output \citep{goodfellow2014generative, salimans2016improved, metz2016unrolled, arjovsky2017wasserstein, srivastava2017veegan}.

Below we theoretically reveal why SDS and VSD are prone to mode collapse.
As shown in \citet{poole2022dreamfusion, wang2023prolificdreamer}, SDS and VSD equals to the gradient of the following Kullback-Leibler (KL) divergence, i.e., $J_{SDS}(\Mat{\theta}) = J_{VSD}(\Mat{\theta}) = J_{KL}(\Mat{\theta})$ up to an additive constant:
\begin{align} \label{eqn:kl_orig}
J_{KL}(\Mat{\theta}) = \mean \left[\Omega(t) \KL(q_t^{\Mat{\theta}}(\Mat{x}_t | \Mat{c}, \Mat{y}) \Vert p_t(\Mat{x}_t | \Mat{y})) \right],
\end{align}
where $\Omega(t) = \omega(t)\sigma_t / \alpha_t$ and the expectation is taken over $t \sim \uniform[0, T]$ and $\Mat{c} \sim p_c(\Mat{c})$.
Here $p_t(\Mat{x}_t | \Mat{y}) = \int p_0(\Mat{x}_0 | \Mat{y})  \gauss(\Mat{x}_t | \alpha_t\Mat{x}_0, \sigma_t^2 \Mat{I}) d\Mat{x}_0$ is the image distribution perturbed by Gaussian noises, while $q_t^{\Mat{\theta}}(\Mat{x}_t | \Mat{c}, \Mat{y}) = \int q^{\Mat{\theta}}_0(\Mat{x}_0 | \Mat{c})  \gauss(\Mat{x}_t | \alpha_t\Mat{x}_0, \sigma_t^2 \Mat{I}) d\Mat{x}_0$ models the image distribution generated by 3D parameter $\Mat{\theta}$ with respect to camera pose $\Mat{c}$ and diffused by Gaussian distribution.
As shown by \citet{wang2023prolificdreamer}, $J_{KL}(\Mat{\theta}) = 0$ implies $q^{\Mat{\theta}}_0(\Mat{x}_0 | \Mat{c}) = p(\Mat{x}_0 | \Mat{y})$, i.e., the distribution of synthesized views satisfy the text-conditioned image distribution.

However, it has not escaped from our notice that $q^{\Mat{\theta}}_0(\Mat{x}_0 | \Mat{c}) = \delta(\Mat{x}_0 - g(\Mat{\theta}, \Mat{c}))$ is a Dirac distribution for both SDS and VSD.
This causes the original KL divergence minimization (Eq. \ref{eqn:kl_orig}) degenerate to a Maximal Likelihood Estimation (MLE) problem:
\begin{align} \label{eqn:kl_equal_mle}
\begin{split}
J_{KL}(\Mat{\theta}) = \underbrace{-\mean \left[\Omega(t) \mean_{\Mat{x_t} \sim q_t^{\Mat{\theta}}(\Mat{x}_t | \Mat{c}, \Mat{y})} \log p_t(\Mat{x}_t | \Mat{y}) \right]}_{J_{MLE}(\Mat{\theta})} \\
- \underbrace{\mean \left[\Omega(t) H[q_t^{\Mat{\theta}}(\Mat{x}_t | \Mat{c}, \Mat{y})]\right]}_{const.},
\end{split}
\end{align}
where $H[q_t^{\Mat{\theta}}(\Mat{x}_t | \Mat{y})] = -\mean_{\Mat{x_t} \sim q_t^{\Mat{\theta}}(\Mat{x}_t | \Mat{c}, \Mat{y})} [\log q_t^{\Mat{\theta}}(\Mat{x}_t | \Mat{c}, \Mat{y})]$ denotes the entropy of $q_t^{\Mat{\theta}}(\Mat{x}_t | \Mat{y})$, which turns out to be a constant because $q_t^{\Mat{\theta}}(\Mat{x}_t | \Mat{c}, \Mat{y}) = \gauss(\Mat{x}_t | \alpha_t g(\Mat{\theta}, \Mat{c}), \sigma_t^2 \Mat{I})$ which has fixed entropy once $t$, $\Mat{\theta}$ and $\Mat{c}$ have been specified.
See full derivation in Appendix \ref{sec:proofs_kl}.

Note that Eq. \ref{eqn:kl_equal_mle} signifies $J_{KL}(\Mat{\theta}) = J_{MLE}(\Mat{\theta})$ up to an additive constant, hence $J_{KL}(\Mat{\theta})$ shares all minima with $J_{MLE}(\Mat{\theta})$.
It is known that likelihood maximization is more prone to mode collapse.
Intuitively, minimizing $J_{MLE}(\Mat{\theta})$ seeks each view \textit{independently} to have the maximum log-likelihood on the image distribution $p(\Mat{x}_0 | \Mat{y})$.
Since $p(\Mat{x}_0 | \Mat{y})$ is usually unimodal and peaks at the canonical view, each view of the scene will collapse to the same local minimum, resulting in Janus artifact (see Fig. \ref{fig:mle_vs_kl}).
We postulate that the existing distillation strategies may be inherently limited by their log-likelihood seeking behaviors, which are more susceptible to mode collapse, especially with biased image distributions.

\begin{figure}[t]
    \centering
    \includegraphics[width=\linewidth]{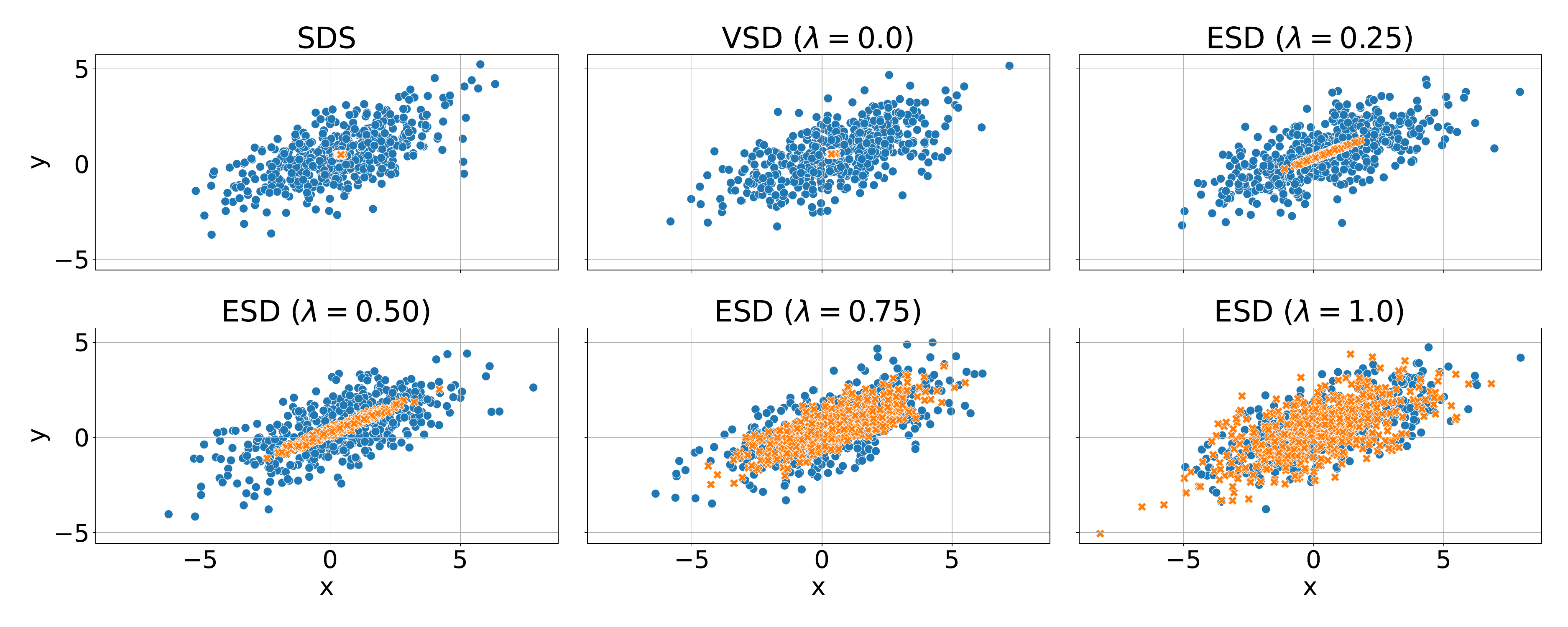}
    \vspace{-2.5em}
    \caption{\small \textbf{Gaussian Example.} To illustrate the effects of entropy regularization, we leverage SDS, VSD and ESD to fit a 2D Gaussian distribution. The \textcolor{blue}{blue} points are sampled from the ground-truth distribution while the \textcolor{orange}{orange} points are from the fitted distribution.}
    \label{fig:gaussian_example}
    \vspace{-1em}
\end{figure}

\section{Entropy Regularized Score Distillation} \label{sec:method}

\begin{algorithm*}[t]
\caption{ESD: Entropic score distillation for text-to-3D generation}
\label{alg:main}
\begin{algorithmic}
\STATE {\bf Input}: A diffusion model $\Mat{\epsilon_{\phi}}(\Mat{x}, t, \Mat{y})$; learnable 3D parameter $\Mat{\theta}$; coefficient $\lambda$; text prompt $\Mat{y}$.
\STATE Initialize $\Mat{\psi}$ for another diffusion model $\Mat{\epsilon_{\psi}}(\Mat{x}, t, \Mat{y})$  with the parameter $\Mat{\phi}$ specified in diffusion model $\Mat{\epsilon_{\phi}}(\Mat{x}, t, \Mat{y})$, parameterized with LoRA. 
\WHILE{not converged}
\STATE Randomly sample a camera pose $\Mat{c} \sim p_c$ and render a view $\Mat{x}_0 = g(\Mat{\theta}, \Mat{c})$ from $\Mat{\theta}$.
\STATE Sample a $t \sim \uniform[0, T]$ and add Gaussian noise $\Mat{\epsilon} \sim \gauss(\Mat{0}, \Mat{I})$: $\Mat{x}_t = \alpha_t \Mat{x}_0 + \sigma_t \Mat{\epsilon}$.
\STATE $\Mat{\theta} \leftarrow \Mat{\theta} + \eta_1 \left[ \omega(t) \frac{\partial g(\Mat{\theta}, \Mat{c})}{\partial \Mat{\theta}} (\Mat{\epsilon_{\phi}}(\Mat{x}_t, t, \Mat{y}) - \lambda \Mat{\epsilon_{\psi}}(\Mat{x}_t, t, \Mat{\emptyset}, \Mat{y}) - (1 - \lambda) \Mat{\epsilon_{\psi}}(\Mat{x}_t, t, \Mat{c}, \Mat{y}) \right] \hfill \refstepcounter{equation}(\theequation)\label{eqn:esd_cfg_alg}$
\STATE With probability $1-p_{\emptyset}$, $\Mat{\psi} \leftarrow \Mat{\psi} - \eta_2 \nabla_{\Mat{\psi}} \left[ \omega(t) \lVert \Mat{\epsilon_{\psi}}(\Mat{x}_t, t, \Mat{c}, \Mat{y}) - \Mat{\epsilon} \rVert_2^2 \right]$.
\STATE Otherwise, $\Mat{\psi} \leftarrow \Mat{\psi} - \eta_2 \nabla_{\Mat{\psi}} \left[ \omega(t) \lVert \Mat{\epsilon_{\psi}}(\Mat{x}_t, t, \Mat{\emptyset}, \Mat{y}) - \Mat{\epsilon} \rVert_2^2 \right]$.
\ENDWHILE
\STATE {\bf Return} $\Mat{\theta}$
\end{algorithmic}
\end{algorithm*}

\subsection{Entropic Score Distillation}
\label{sec:esd}

In this section, we highlight the importance of the entropy in score distillation.
It is known that higher entropy implies the corresponding distribution could cover a larger support of the ambient space and thus increase the sample diversity.
In Eq. \ref{eqn:kl_equal_mle}, the entropy term is shown to diminish in the training objective, which causes each generated view to lack diversity and collapse to a single image with the highest likelihood.

To this end, we propose to bring in an entropy regularization to $J_{MLE}(\Mat{\theta})$ for boosting the view diversity.
Since $q_t^{\Mat{\theta}}(\Mat{x}_t | \Mat{c}, \Mat{y})$ has constant entropy, we regularize entropy for the distribution $q_t^{\Mat{\theta}}(\Mat{x}_t | \Mat{y}) = \int q_t^{\Mat{\theta}}(\Mat{x}_t | \Mat{c}, \Mat{y}) p_c(\Mat{c}) d\Mat{c}$, which can be simulated by randomly sampling views from the 3D parameter $\Mat{\theta}$.
Consider the following objective:
\begin{align} \label{eqn:kl_ent}
\begin{split}
J_{Ent}(\Mat{\theta}, \lambda) = -\mean \left[\Omega(t) \mean_{\Mat{x_t} \sim q_t^{\Mat{\theta}}(\Mat{x}_t | \Mat{c}, \Mat{y})} \log p_t(\Mat{x}_t | \Mat{y}) \right] \\
- \lambda \mean \left[\Omega(t) H[q_t^{\Mat{\theta}}(\Mat{x}_t | \Mat{y})]\right],
\end{split}
\end{align}
where $\lambda$ is a hyper-parameter controlling the regularization strength. 
We note that without $H[q_t^{\Mat{\theta}}(\Mat{x}_t | \Mat{y})]$, each view is optimized independently and implicitly regularized by the underlying parameterization.
However, upon imposing $H[q_t^{\Mat{\theta}}(\Mat{x}_t | \Mat{y})]$, all views become explicitly correlated with each other, as they collectively contribute to the entropy computation.
Intuitively, $J_{Ent}(\Mat{\theta}, \lambda) = J_{MLE}(\Mat{\theta}) - \lambda \mean[\Omega(t) H[q_t^{\Mat{\theta}}(\Mat{x}_t | \Mat{y})]]$ seeks the maximal log-likelihood for each view while simultaneously enlarging the entropy for distribution $q_t^{\Mat{\theta}}(\Mat{x}_t | \Mat{y})$, which spans the support and encourages diversity across the rendered views.
To gain more insights, we present the following theoretical results:
\begin{theorem} \label{thm:ent_equal_kl}
For any $\lambda \in \real$ and $\Mat{\theta} \in \real^{D}$, we have $J_{Ent}(\Mat{\theta}, \lambda) = \lambda \mean_{t}[ \Omega(t) \KL(q_t^{\Mat{\theta}}(\Mat{x}_t | \Mat{y}) \Vert p_t(\Mat{x}_t | \Mat{y}))] + (1 - \lambda) \mean_{t, \Mat{c}}[\Omega(t) \KL(q_t^{\Mat{\theta}}(\Mat{x}_t | \Mat{c}, \Mat{y}) \Vert p_t(\Mat{x}_t | \Mat{y}))] + const.$
\end{theorem}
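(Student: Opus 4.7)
The plan is to rewrite the entropy term $H[q_t^{\Mat{\theta}}(\Mat{x}_t | \Mat{y})]$ in terms of a KL divergence against $p_t(\Mat{x}_t | \Mat{y})$, then substitute back into $J_{Ent}$ and reconcile the leftover cross-entropy piece with the MLE term already present. Concretely, I would start from the identity
\begin{equation*}
\KL\bigl(q_t^{\Mat{\theta}}(\Mat{x}_t | \Mat{y}) \,\Vert\, p_t(\Mat{x}_t | \Mat{y})\bigr) = -H[q_t^{\Mat{\theta}}(\Mat{x}_t | \Mat{y})] - \mean_{\Mat{x}_t \sim q_t^{\Mat{\theta}}(\Mat{x}_t | \Mat{y})}\!\bigl[\log p_t(\Mat{x}_t | \Mat{y})\bigr],
\end{equation*}
which rearranges to express the (negative) entropy as a KL divergence plus a cross-entropy term.

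Next, I would use the fact that marginalizing the camera pose in the cross-entropy collapses to the unconditional expectation under $q_t^{\Mat{\theta}}(\Mat{x}_t | \Mat{y})$: since $q_t^{\Mat{\theta}}(\Mat{x}_t | \Mat{y}) = \int q_t^{\Mat{\theta}}(\Mat{x}_t | \Mat{c}, \Mat{y}) p_c(\Mat{c})\, d\Mat{c}$, iterated expectation gives
\begin{equation*}
\mean_{\Mat{c}}\!\bigl[\mean_{\Mat{x}_t \sim q_t^{\Mat{\theta}}(\Mat{x}_t | \Mat{c}, \Mat{y})}\log p_t(\Mat{x}_t | \Mat{y})\bigr] = \mean_{\Mat{x}_t \sim q_t^{\Mat{\theta}}(\Mat{x}_t | \Mat{y})}\!\bigl[\log p_t(\Mat{x}_t | \Mat{y})\bigr],
\end{equation*}
so the cross-entropy term (weighted by $\Omega(t)$ and averaged over $t$) is exactly $-J_{MLE}(\Mat{\theta})$. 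Substituting into Eq.~\ref{eqn:kl_ent}, I obtain
\begin{equation*}
J_{Ent}(\Mat{\theta}, \lambda) = (1-\lambda)\, J_{MLE}(\Mat{\theta}) + \lambda\, \mean_{t}\!\bigl[\Omega(t) \KL(q_t^{\Mat{\theta}}(\Mat{x}_t | \Mat{y}) \Vert p_t(\Mat{x}_t | \Mat{y}))\bigr].
\end{equation*}

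To finish, I would invoke the identity already established in Eq.~\ref{eqn:kl_equal_mle}, namely $J_{MLE}(\Mat{\theta}) = \mean_{t,\Mat{c}}[\Omega(t)\KL(q_t^{\Mat{\theta}}(\Mat{x}_t | \Mat{c}, \Mat{y}) \Vert p_t(\Mat{x}_t | \Mat{y}))] + \mean[\Omega(t) H[q_t^{\Mat{\theta}}(\Mat{x}_t | \Mat{c}, \Mat{y})]]$, where the second term is a $\Mat{\theta}$-independent constant because $q_t^{\Mat{\theta}}(\Mat{x}_t | \Mat{c}, \Mat{y})$ is Gaussian with fixed covariance $\sigma_t^2 \Mat{I}$. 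Plugging this in and absorbing the $(1-\lambda)$ multiple of the Gaussian entropy into the additive constant yields exactly the claimed decomposition.

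The calculation is essentially bookkeeping; the only subtlety worth flagging carefully is the step that swaps $\mean_{\Mat{c}}\mean_{\Mat{x}_t | \Mat{c}}$ for $\mean_{\Mat{x}_t}$ under the marginal $q_t^{\Mat{\theta}}(\Mat{x}_t | \Mat{y})$, since without it the cross-entropy term would not collapse to $J_{MLE}$. I would also be explicit that the "constant" in the conclusion depends on $t$ and $\lambda$ (through the Gaussian entropy $\frac{D}{2}\log(2\pi e\sigma_t^2)$) but not on $\Mat{\theta}$, which is all that is needed for the variational argument in the sequel.
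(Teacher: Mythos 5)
Your proof is correct, but it takes a genuinely different route from the paper's. The paper proves Theorem~\ref{thm:ent_equal_kl} indirectly, at the level of gradients: it first establishes formulas for $\nabla_{\Mat{\theta}} J_{KL}$ (Lemma~\ref{lem:grad_kl}), $\nabla_{\Mat{\theta}} \overline{J_{KL}}$ (Lemma~\ref{lem:grad_kl_ii}, itself deduced by recognizing $\overline{J_{KL}} = J_{Ent}(\cdot, 1)$), and $\nabla_{\Mat{\theta}} J_{Ent}$ (Theorem~\ref{thm:grad_esd}, which requires the reparameterization/path-derivative argument for the entropy gradient in Lemma~\ref{lem:grad_ent}), and then verifies that the claimed linear combination holds for the gradients, from which equality up to an additive constant follows. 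You instead work directly with the objectives: you rearrange $\KL(q_t^{\Mat{\theta}}(\cdot|\Mat{y}) \Vert p_t(\cdot|\Mat{y})) = -H[q_t^{\Mat{\theta}}(\cdot|\Mat{y})] - \mean_{q_t^{\Mat{\theta}}(\cdot|\Mat{y})}[\log p_t]$ to eliminate the entropy term, use the tower property (since $q_t^{\Mat{\theta}}(\Mat{x}_t|\Mat{y}) = \int q_t^{\Mat{\theta}}(\Mat{x}_t|\Mat{c},\Mat{y})\,p_c(\Mat{c})\,d\Mat{c}$) to identify the resulting cross-entropy with $-J_{MLE}$, and finally invoke Eq.~\ref{eqn:kl_equal_mle} to convert $J_{MLE}$ into the conditional KL plus a $\Mat{\theta}$-independent constant. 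Your approach buys you a literal algebraic identity (not merely equality of gradients, which strictly speaking also needs a connectedness argument to conclude equality up to a constant), avoids score-function calculus entirely, and makes the constant explicit as a $(1-\lambda)$-multiple of the Gaussian entropy $\tfrac{D}{2}\log(2\pi e\sigma_t^2)$ averaged over $t$. The paper's route is more economical in context because it reuses gradient computations that are needed anyway to state the ESD update rule; but as a standalone proof of Theorem~\ref{thm:ent_equal_kl}, your direct bookkeeping is the cleaner argument.
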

We prove Theorem \ref{thm:ent_equal_kl} in Appendix \ref{sec:proofs_cfg}.
Theorem \ref{thm:ent_equal_kl} implies that $J_{Ent}(\Mat{\theta}, \lambda)$ essentially equal to a combination of two types of KL divergences, where the former one minimizes the distribution discrepancy between $q_t^{\Mat{\theta}}(\Mat{x}_t | \Mat{y})$ and $p_t^{\Mat{\theta}}(\Mat{x}_t | \Mat{y})$ which marginalizes the camera pose within $q_t^{\Mat{\theta}}$, while the latter is the original KL divergence $J_{KL}(\Mat{\theta})$ adopted by SDS and VSD which takes expectation over $\Mat{c}$ out of KL divergence.

Next, we derive the gradient of $J_{Ent}(\Mat{\theta}, \lambda)$ that will be backpropagated to update the 3D representation. It can be obtained by path derivative and reparameterization trick:
\begin{align}
\nabla_{\Mat{\theta}} J_{Ent}(\Mat{\theta}, \lambda) = -\mean \left[ \omega(t) \frac{\partial g(\Mat{\theta}, \Mat{c})}{\partial \Mat{\theta}} \left( \sigma_t \nabla\log p_t(\Mat{x}_t | \Mat{y}) \right.\right. \label{eqn:esd} \\
\hspace{3em} - \left. \vphantom{\frac{\partial g(\Mat{\theta}, \Mat{c})}{\partial \Mat{\theta}}} \left. \lambda \sigma_t \nabla \log q_t^{\Mat{\theta}}(\Mat{x}_t | \Mat{y}) \right) \right]. \nonumber
\end{align}
The full derivation is deferred to Appendix \ref{sec:proofs_esd}.
We name this update rule as \textit{Entropic Score Distillation (ESD)}.
Note that ESD differs from VSD as its second score function does not depend on the camera pose.

\subsection{Classifier-Free Guidance Trick}
\label{sec:cfg_trick}

Similar to SDS and VSD, we approximate $\nabla\log p_t(\Mat{x}_t | \Mat{y})$ via a pre-trained diffusion model $\Mat{\epsilon}_{\Mat{\phi}}(\Mat{x}_t, t, \Mat{y})$.
However, $\nabla \log q_t^{\Mat{\theta}}(\Mat{x} | \Mat{y})$ is not readily available.
We found that directly fine-tuning a pre-trained diffusion model using rendered images to approximate $\nabla \log q_t^{\Mat{\theta}}(\Mat{x} | \Mat{y})$, akin to ProlificDreamer,  does not yield robust performance.
We postulate this difficulty arises from the removal of the camera condition, increasing the complexity of the distribution to be fitted.

To tackle this problem, we recall the result in Theorem \ref{thm:ent_equal_kl} that $J_{Ent}(\Mat{\theta}, \lambda)$ can be written in terms of two KL divergence losses.
Therefore, its gradient can be decomposed as a weighted combination of their gradients, which correspond to unconditional and conditional score functions in terms of the camera pose $\Mat{c}$, respectively:
\begin{align} \label{eqn:esd_cfg}
\nabla_{\Mat{\theta}} J_{Ent}(\Mat{\theta}, \lambda) = -\mean \left[ \omega(t) \frac{\partial g(\Mat{\theta}, \Mat{c})}{\partial \Mat{\theta}} ( \sigma_t \nabla\log p_t(\Mat{x}_t | \Mat{y})  \right.  \\
- \left. \vphantom{\frac{\partial g(\Mat{\theta}, \Mat{c})}{\partial \Mat{\theta}}}\lambda \sigma_t \nabla \log q_t^{\Mat{\theta}}(\Mat{x}_t | \Mat{y})
- (1 - \lambda) \sigma_t \nabla \log q_t^{\Mat{\theta}}(\Mat{x}_t | \Mat{c}, \Mat{y}) ) \right]. \nonumber
\end{align}
We formally prove Eq. \ref{eqn:esd_cfg} in Appendix \ref{sec:proofs_cfg}.
With the above formulation, ESD can be implemented via the Classifier-Free Guidance (CFG) trick, which was initially proposed to balance the variety and quality of text-conditionally generated images from diffusion models \citep{ho2022classifier}.
Algorithm \ref{alg:main} outlines the computation paradigm of ESD, in which we surrogate score functions in Eq. \ref{eqn:esd_cfg} with pre-trained and fine-tuned diffusion models (see Eq. \ref{eqn:esd_cfg_alg}), and takes random turns with a probability $p_{\emptyset}$ to balance the training of conditional and unconditional score functions, as suggested by \citet{ho2022classifier}.

\begin{figure*}[t]
    \centering
    \includegraphics[width=0.9\linewidth]{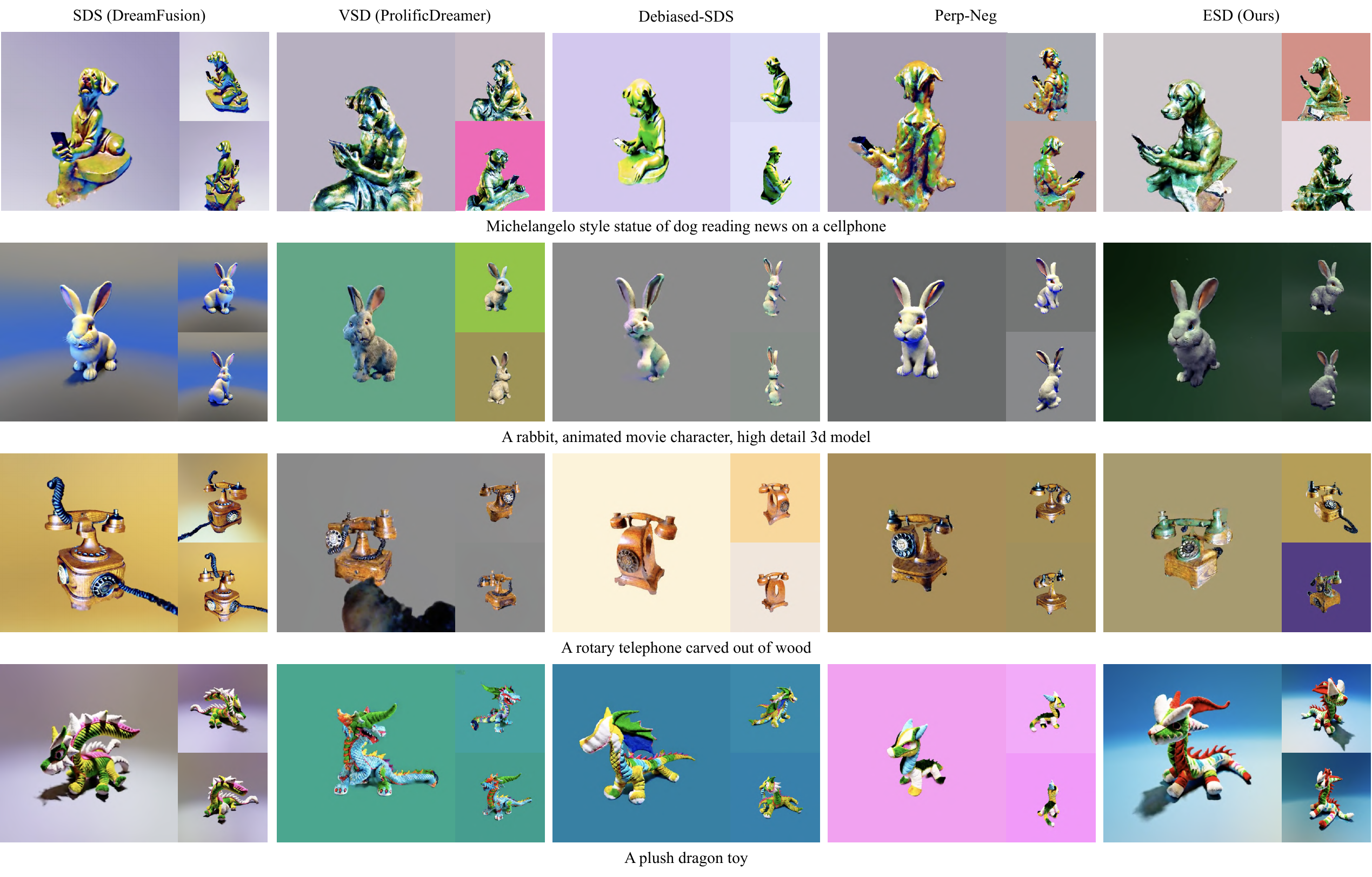}
    \caption{\small \textbf{Qualitative Results.} Our proposed outperforms all baselines in terms of better geometry and well-constructed texture details. Our results deliver photo-realistic and diverse rendered views, while baseline methods more or less suffer from the Janus problem. Best view in an electronic copy.}
    \label{fig:res_janus}
    \vspace{-1em}
\end{figure*}

\subsection{Discussion}

In VSD, the camera-conditioned score is believed to play a significant role in facilitating visual quality.
Intuitively, such conditioning can equip the tuned diffusion model with multi-view priors \citep{liu2023zero}.
Also, \citet{hertz2023delta} suggests such a method can be useful to stabilize the update of the implicit parameters.
However, ESD counters this argument by suggesting that the camera condition might not always be advantageous, particularly when the particle size is reduced to one.
In such cases, the resulting KL divergence provably degenerates to a likelihood maximization algorithm vulnerable to mode collapse.

It is noteworthy that, even though their subtle differences in implementation, the optimization objectives of ESD and VSD are fundamentally different (see Sec. \ref{sec:esd}).
ESD sets itself apart from VSD by incorporating entropy regularization, a crucial feature absent in VSD, aiming to augment diversity across views.
Despite originating from distinct objectives, our theoretical establishment allows for a straightforward implementation of ESD based on VSD using the CFG trick.

We provide an illustrative example by leveraging SDS, VSD and ESD (with different $\lambda$'s) to fit a 2D Gaussian distribution in Fig. \ref{fig:gaussian_example}.
With SDS and VSD, all samples are converged to the high-density area while ESD recovers the entire support of the distribution.
We provide more details and examples in Appendix \ref{sec:toy_examples}.

\begin{figure}[!t]
    \centering
    \includegraphics[width=0.95\linewidth]{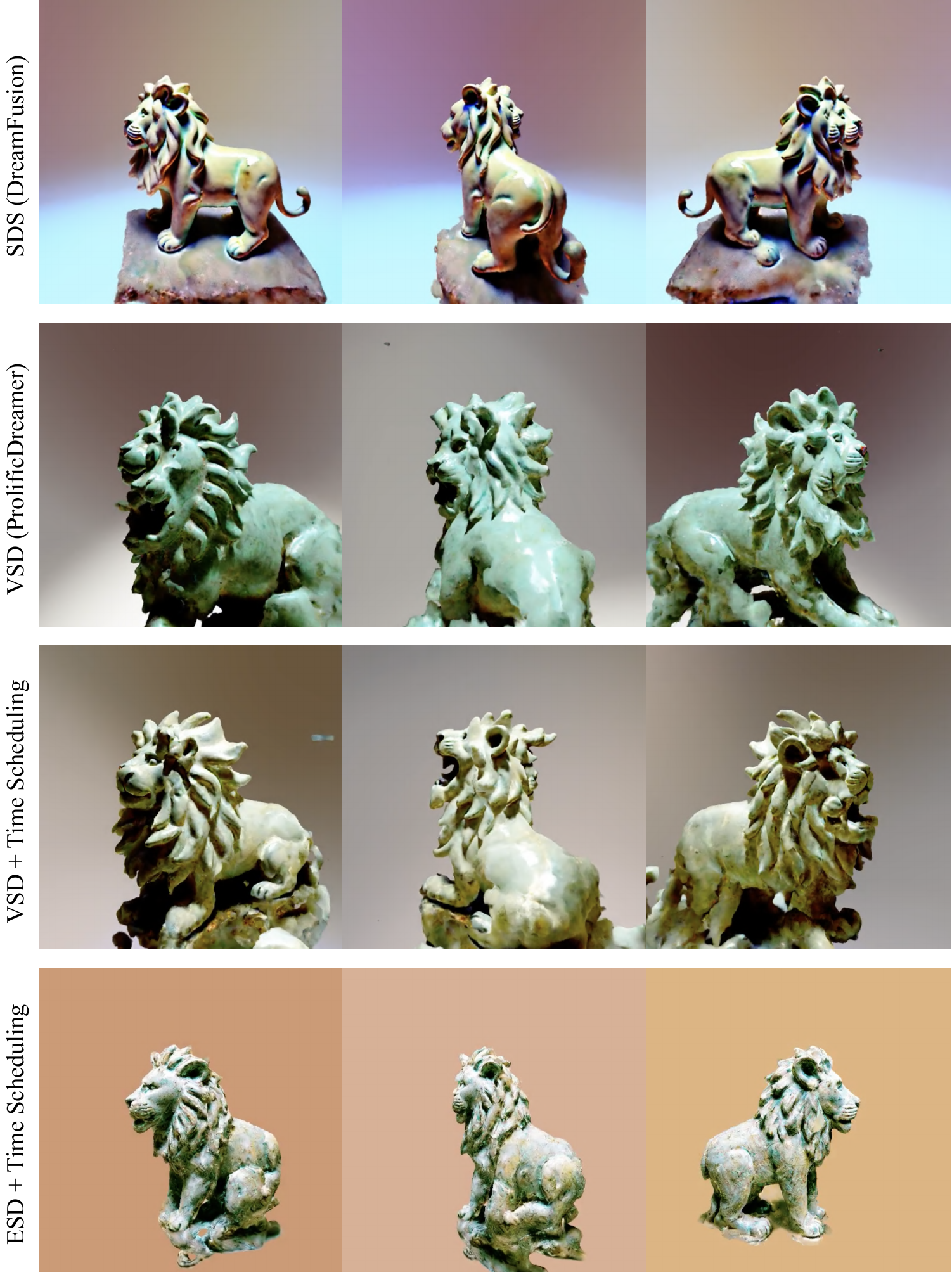}
    \caption{\small \textbf{Qualitative Results.} We combine our proposed ESD with timestep scheduling in DreamTime \citep{huang2023dreamtime} and compare it against baseline methods. Prompt: A caramic lion.}
    \label{fig:res_with_tp}
    \vspace{-1em}
\end{figure}

\section{Other Related Work}

\paragraph{Text-to-Image Diffusion Model.}
Text-to-image diffusion models~\citep{ramesh2021zero, ramesh2022hierarchical} are cornerstone components of text-to-3D generation.
It involves text embedding conditioning into the iterative denoising process.
Equipped with large-scale image-text paired datasets, many works ~\citep{nichol2021glide, ramesh2022hierarchical, imagen} scale up to tackle text-to-image generation. Among them, latent diffusion models attracted great interest in the open-source community since they reduced the computation cost by diffusing in the low-resolution latent space instead of directly in the pixel space. In addition, text-to-image diffusion models have also found applications in various computer vision tasks, including text-to-3D~\citep{poole2022dreamfusion,singer2023text}, image-to-3D~\citep{xu2022neurallift}, text-to-svg~\citep{jain2023vectorfusion}, text-to-video~\citep{singer2022make,khachatryan2023text2video}, etc.

\vspace{-1em}
\paragraph{3D Generation with 2D Priors.}
Well-annotated 3D data requires immense effort to collect. Instead, a line of research studies on how to learn 3D generative models using 2D supervision.
Early attempts, including pi-GAN~\citep{ranftl2021vision}, EG3D~\citep{chan2022efficient}, GRAF~\citep{schwarz2020graf}, GIRAFFE~\citep{niemeyer2021giraffe}, adopt adversarial loss between the rendered images and natural images.
DreamField \cite{jain2022zero} leverages CLIP to align NeRF with text prompts.
More recently, with the rapid development of text-to-image diffusion models, diffusion-based image priors have attracted increasing interest, and score distillation has then become the dominant technique.
Pioneer works DreamFusion~\citep{poole2022dreamfusion} and ProlificDreamer~\citep{wang2023prolificdreamer} have been introduced in detail in Sec. \ref{sec:prelim}.
Their concurrent work SJC~\citep{wang2023score} derives the score Jacobian chaining method from another theoretical viewpoint of Perturb and Average Scoring.
Even though diffusion models directly trained with 3D data nowadays demonstrate largely improved results \cite{shi2023mvdream, liu2023syncdreamer}, score distillation still plays a pivotal role in ensuring view consistency.

\vspace{-1em}
\paragraph{Techniques to Improve Score Distillation.} 
Providing the empirical promise of score distillation, there have been numerous techniques proposed to improve its effectiveness.
Magic3D \cite{lin2023magic3d} and Fantasia3D \cite{chen2023fantasia3d} utilize mesh and DMTet \cite{shen2021deep} to disentangle the optimization of geometry and texture. TextMesh \cite{tsalicoglou2023textmesh} and 3DFuse \cite{seo2023let} use depth-conditioned text-to-image diffusion priors that support geometry-aware texturing.
Score debiasing\citep{hong2023debiasing} and Perp-Neg~\citep{armandpour2023re} study to refine the text prompts for a better 3D generation.
DreamTime~\citep{huang2023dreamtime} and RED-Diff~\citep{mardani2023variational} investigate the timestep scheduling in the score distillation process.
HIFA~\citep{zhu2023hifa} adopts multiple diffusion steps for distillation. 
Score distillation also works with auxiliary losses, including CLIP loss~\citep{xu2022neurallift} and adversarial loss~\citep{shao2023control4d,chen2023it3d}.

\section{Evaluation Metrics} \label{sec:metrics}

In this section, we introduce four information-theoretic metrics to numerically evaluate the generated 3D results with a particular focus on identifying Janus artifacts or mode collapse.
The metrics we propose comprehensively cover four aspects: 1) the relevance with the text prompts, 2) distribution fitness, 3) rendering quality, and 4) view diversity.

\paragraph{CLIP Distance.}
We compute the average distance between rendered images and the text embedding to reflect the relevance between generated results and the specified text prompt.
Specifically, we render $N$ views from the generated 3D representations, and for each view, we obtain an embedding vector through the image encoder of a CLIP model \citep{wang2022clip}.
In the meantime, we compute the text embedding utilizing the text encoder.
The CLIP distance is computed as the one minus cosine similarity between the image embeddings and text embeddings averaged over all views.

\paragraph{Fr\'echet inception distance (FID).}
As shown in Sec. \ref{sec:mode_collapse} and \ref{sec:method}, score distillation essentially matches distributions via KL divergence.
Hence, it becomes reasonable to employ FID to measure the distance between the image distribution $q^{\Mat{\theta}}(\Mat{x}_0 | \Mat{y})$ generated by randomly rendering 3D representation and the text-conditioned image distribution $p(\Mat{x}_0 | \Mat{y})$ modeled by a diffusion model.
We sample $N$ images using pre-trained latent diffusion model given text prompts as the ground truth image dataset, and render $N$ views uniformly distributed over a unit sphere from the optimized 3D scene as the generated image dataset.
Then standard FID \citep{heusel2017gans} is computed between these two sets of images.
Note that FID is known to be effective in quantitatively identifying mode collapse.

\paragraph{Inception Quality and Variety.}
Thanks to our established connection with mode collapse, we know that Janus problem is due to a lack of sample diversity.
Inspired by Inception Score (IS) \citep{salimans2016improved}, we utilize entropy-related metrics to reflect the generated image quality and diversity.
We propose Inception Quality (IQ) and Inception Variety (IV), formulated as below:
\begin{align}
IQ(\Mat{\theta}) &= \mean_{\Mat{c}} \left[ H[p_{cls}(\Mat{y} | g(\Mat{\theta}, \Mat{c}))] \right], \\
IV(\Mat{\theta}) &= H[\mean_{\Mat{c}} [p_{cls}(\Mat{y} | g(\Mat{\theta}, \Mat{c})]],
\end{align}
\looseness=-1
where $p_{cls}(\Mat{y} | \Mat{x})$ is a pre-trained classifier.
IQ computes the average entropy of the label logits predicted for all rendered views, while IV computes the entropy of the averaged label logits of all rendered views.
Intuitively, the smaller IQ means highly confident classification results on rendered views, which also indicates better visual quality of generated 3D assets. 
In the meanwhile, the higher IV signifies that each rendered view is likely to have a distinct label prediction, meaning the 3D creation has higher view diversity.
Note that IV upper bounds IQ due to Jensen inequality.
So we can define Inception Gain $IG = (IV - IQ) / IQ$, which characterizes the information gain brought by knowing where the camera pose is, namely the improvement of distinguishability among different views.
\begin{figure*}
    \centering
    \includegraphics[width=0.9\linewidth]{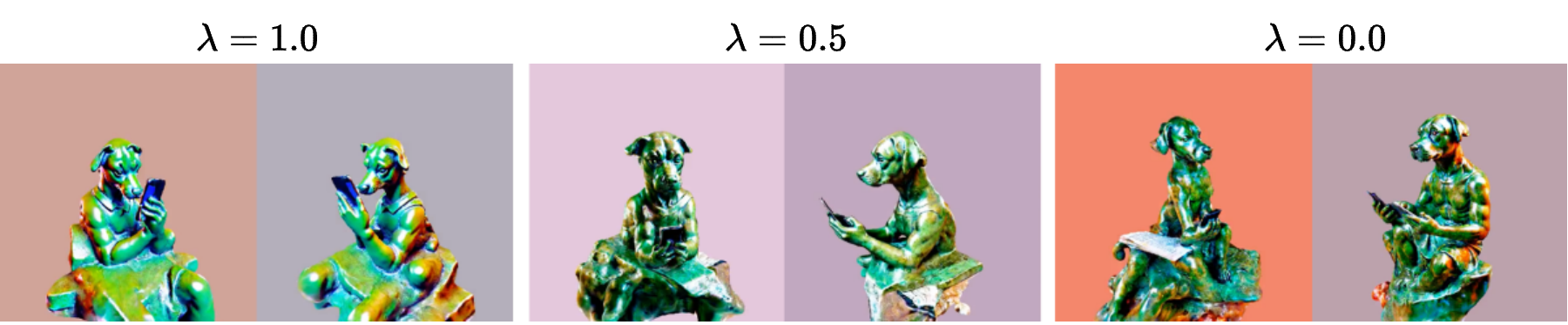}
    \vspace{-0.5em}
    \caption{\textbf{Ablation Studies on $\lambda$.} We investigate the choice of different entropy regularization strength $\lambda$. Prompt: Michelangelo-style statue of dog reading news on a cellphone.}
    \label{fig:ablation_cfg}
\end{figure*}

\section{Experiments}
\label{sec:expr}

\paragraph{Settings.}
In this section, we empirically validate the effectiveness of our proposal.
The chosen prompts are targeted at objects with clearly defined canonical views, posing a challenge for existing methods.
Our baseline approaches include SDS (DreamFusion) \cite{poole2022dreamfusion} and VSD (ProlificDreamer) \cite{wang2023prolificdreamer}, as well as two methods dedicated to solving Janus problem: Debiased-SDS \cite{hong2023debiasing} and Perp-Neg \cite{armandpour2023re}.
For fair comparison, all experiments are benchmarked under the open-source \href{https://github.com/threestudio-project/threestudio}{threestudio} framework.
Geometry refinement \cite{wang2023prolificdreamer} is adopted for all distillation schemes.
Please refer to Appendix \ref{sec:expr_details} for more implementation details.

\paragraph{Qualitative Comparison.}
We present qualitative comparisons in Fig.~\ref{fig:res_janus}. We encourage interested readers to Appendix \ref{sec:more_vis_res} for more results and our \href{{https://vita-group.github.io/3D-Mode-Collapse/}}{project page} for videos.
It is clearly shown that our proposed ESD delivers more precise geometry with the Janus problem alleviated.
In comparison, the results presented by SDS and VSD all contain more or less corrupted geometry with multi-face structures.
Debiased-SDS and Perp-Neg are shown to be effective for some text prompts, while not so consistent as ESD.
Additionally, we find that ESD can work particularly well when combined with the time-prioritized scheduling proposed in DreamTime~\citep{huang2023dreamtime}, as shown in Fig.~\ref{fig:res_with_tp}. 
This means ESD is orthogonal to many other methods and can cooperate with them to further reduce Janus artifacts.
\vspace{-0.5em}

\begin{table}[t]
\vspace{-1em}
\centering
\caption{\small \textbf{Quantitative Comparisons.} $(\downarrow)$ means the lower the better, and $(\uparrow)$ means the higher the better.}
\label{tab:metric}
\vspace{-0.5em}
\resizebox{\linewidth}{!}{
\begin{tabular}{c|cccccc}
\toprule
& CLIP ($\downarrow$) & FID ($\downarrow$) & IQ ($\downarrow$) & IV ($\uparrow$) & IG ($\uparrow$) & SR ($\uparrow$) \\
\hline
SDS & 0.737 & 291.860 & 4.295 & \textbf{4.8552} & 0.123 & 15.00\% \\
VSD & 0.725 & 265.141 & 3.149 & 3.5712 & 0.137 & 19.17\% \\
ESD & \textbf{0.714} & \textbf{235.915} & \textbf{3.135} & 4.0314 & \textbf{0.327} & \textbf{55.83\%} \\
\bottomrule
\end{tabular}
}
\vspace{-1em}
\end{table}
\paragraph{Quantitative Comparison.}
With metrics proposed in Sec. \ref{sec:metrics}, we numerically evaluate our method and baselines across 120 text prompts provided in \cite{wu2024gpt}.
We additionally involve \underline{S}uccessful generation \underline{R}ate (SR) based on human evaluation.
The results are presented in Tab.~\ref{tab:metric}.
We observe that among all metrics, ESD reaches the best CLIP score, FID, and IG.
More importantly, ESD achieves the optimal balance between view quality and diversity as shown by IQ and IV.
Whereas, SDS suffers from low image quality with high IQ and VSD is limited by insufficient view variety with low IV.
The superior IG of ESD indicates that views inside the generated scene are distinguishable rather than collapsing to be the same.
We defer the breakdown table for numerical evaluation on examples in Fig. \ref{fig:res_janus}, human evaluation criteria, and the standard deviation of metrics to Appendix \ref{sec:full_metrics}.

\paragraph{Ablation Studies}
We conduct ablation studies on the choice of $\lambda$ (\ie CFG weights) in Fig.~\ref{fig:ablation_cfg}.
We demonstrate that $\lambda$ can adjust ESD's preference toward view- quality or diversity.
When set to one, the produced Janus-free result albeits with fewer realistic details in the textures. Conversely, when set to zero, ESD equates to VSD, and the Janus problem emerges again. We empirically find that choosing $\lambda$ around 0.5 yields the best result, balancing fine-grained textures and well-constructed geometry.
We also implement ESD by directly fitting the score function $\nabla\log q_t^{\Mat{\theta}}(\Mat{x}_t | \Mat{y})$ without camera pose conditioning to validate the suggested implementation by CFG trick.
We show in Fig. \ref{fig:ablation_impl} that this optimization scheme is unstable. As training proceeds, the gradient explodes, and the optimized texture overflows.

\begin{figure}[!t]
    \centering
    \vspace{-1em}
    \includegraphics[width=\linewidth]{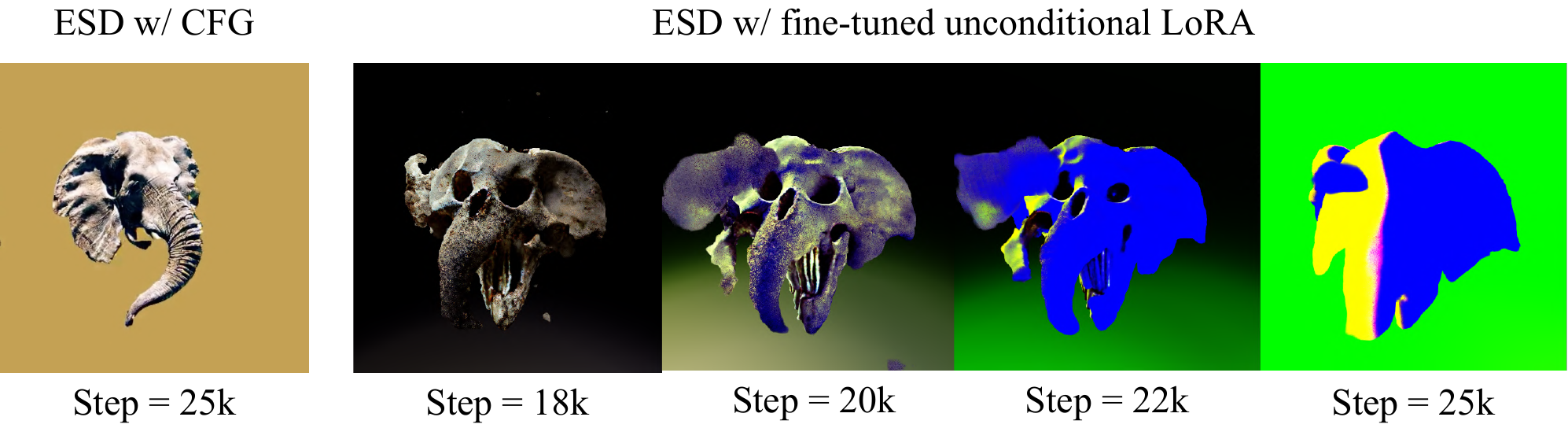}
    \vspace{-1.5em}
    \caption{\small \textbf{Ablation on Implementations.} The successfully generated result is obtained via our suggested CFG trick while the diverged result is yielded by fitting the unconditioned score function in Eq. \ref{eqn:esd} via LoRA. Prompt: an elephant skull.}
    \label{fig:ablation_impl}
    \vspace{-1em}
\end{figure}

\section{Conclusion}

In this paper, we reveal that existing score distillation methods degenerate to maximal likelihood seeking on each view independently, leading to the mode collapse problem. We identify that re-establishing the entropy term in the variational objective brings a new update rule, called Entropic Score Distillation (ESD), which is theoretically equivalent to adopting classifier-free guidance trick upon variational score distillation.
ESD maximizes the entropy of the rendered image distribution, encouraging diversity across views and mitigating the Janus problem.

\subsubsection*{Acknowledgments}
P Wang is sincerely grateful for constructive feedback regarding this manuscript from Zhaoyang Lv, Xiaoyu Xiang, Amit Kumar, Jinhui Xiong, and Varun Nagaraja.
P Wang also thanks Ruisi Cai for providing decent visual materials for illustration purposes.
Any statements, opinions, findings, and conclusions or recommendations expressed in this material are those of the authors and do not necessarily reflect the views of their employers or the supporting entities. 

{
    \small
    \bibliographystyle{ieeenat_fullname}
    \bibliography{main}
}

\clearpage
\setcounter{page}{1}
\appendix
\onecolumn

{\centering
\Large
\textbf{\thetitle}\\
\vspace{0.5em}Supplementary Material \\
\vspace{1.0em}
}
    
\section{Deferred Theory}
\label{sec:proofs}

We present deferred proofs and derivations in this section.
In the beginning, we justify several claimed properties of $J_{KL}$ (Eq. \ref{eqn:kl_orig}). Then we formally derive ESD (Eq. \ref{eqn:esd}) via our proposed objective $J_{Ent}$ (Eq. \ref{eqn:kl_ent}). Lastly, we prove that Classifier-Free Guidance trick (CFG) (Eq. \ref{eqn:esd_cfg}) can be used to implement ESD.

\subsection{Justification of Vanilla KL Divergence $J_{KL}$}
\label{sec:proofs_kl}

Let us consider KL divergence objective restated from Eq. \ref{eqn:kl_orig}:
\begin{align}
    J_{KL}(\Mat{\theta}) = \mean_{t \sim \uniform[0, T], \Mat{c} \sim p_c(\Mat{c})} \left[ \omega(t) \frac{\sigma_t}{\alpha_t} \KL(q^{\Mat{\theta}}_t(\Mat{x}_t | \Mat{c}, \Mat{y}) \Vert p_t(\Mat{x}_t | \Mat{y})) \right],
\end{align}
where we recall the notations: $\alpha_t, \sigma_t \in \real_{+}$ are time-dependent diffusion coefficients, $\Mat{c} \sim p_c(\Mat{c})$ is a camera pose drawn from a prior distribution over $\mathbb{SO}(3) \times \real^3$, and $g(\Mat{\theta}, \Mat{c})$ renders an image at viewpoint $\Mat{c}$ from the 3D representation $\Mat{\theta}$.
$p_t(\Mat{x}_t | \Mat{y})$ is the Gaussian diffused image distribution denoted as below:
\begin{align} \label{eqn:def_p_t}
p_t(\Mat{x}_t | \Mat{y}) = \int p_0(\Mat{x}_0 | \Mat{y}) \gauss(\Mat{x}_t | \alpha_t \Mat{x}_0, \sigma_t^2 \Mat{I}) d\Mat{x}_0,
\end{align}
where $p_0(\Mat{x}_0 | \Mat{y})$ is the text-conditioned distribution of clean images.
We also define $q_t(\Mat{x}_t | \Mat{c}, \Mat{y})$ as the Gaussian diffused distribution of rendered images:
\begin{align} \label{eqn:def_q_t}
q^{\Mat{\theta}}_t(\Mat{x}_t | \Mat{c}, \Mat{y}) = \int q^{\Mat{\theta}}_0(\Mat{x}_0 | \Mat{c}) \gauss(\Mat{x}_t | \alpha_t \Mat{x}_0, \sigma_t^2 \Mat{I}) d\Mat{x}_0,
\end{align}
where we assume $\Mat{x}_0$ is independent of text prompt $\Mat{y}$ given the camera pose and underlying 3D representation. Furthermore, we assume the rendering process has no randomness, thus $q^{\Mat{\theta}}_0(\Mat{x}_0 | \Mat{c}) = \delta(\Mat{x}_0 - g(\Mat{\theta}, \Mat{c}))$ can be written as a Dirac distribution.

Now, we can derive the gradient of $J_{KL}(\Mat{\theta})$, as summarized in the following lemma:
\begin{lemma}[Gradient of $J_{KL}$] \label{lem:grad_kl}
For any $\Mat{\theta}$, we have:
\begin{align}
\nabla_{\Mat{\theta}} J_{KL}(\Mat{\theta}) = -\mean_{t \sim \uniform[0, T], \Mat{c} \sim p_c(\Mat{c}), \Mat{\epsilon} \sim \gauss(\Mat{0}, \Mat{I})} \left[ \omega(t) \frac{\partial g(\Mat{\theta}, \Mat{c})}{\partial \Mat{\theta}} \sigma_t \nabla \log p_t(\Mat{x}_t | \Mat{y})\right],
\end{align}
where $\Mat{x}_t = \alpha_t \Mat{x}_0 + \sigma_t \Mat{\epsilon}$, and $\Mat{x}_0 = g(\Mat{\theta}, \Mat{c})$.
\end{lemma}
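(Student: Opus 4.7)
The plan is to expand the KL divergence into its cross-entropy and entropy pieces and observe that, thanks to the Dirac assumption $q^{\Mat{\theta}}_0(\Mat{x}_0|\Mat{c}) = \delta(\Mat{x}_0 - g(\Mat{\theta},\Mat{c}))$, the distribution $q^{\Mat{\theta}}_t(\Mat{x}_t|\Mat{c},\Mat{y})$ in Eq.~\eqref{eqn:def_q_t} collapses to a Gaussian $\gauss(\Mat{x}_t \mid \alpha_t g(\Mat{\theta},\Mat{c}), \sigma_t^2 \Mat{I})$ with $\Mat{\theta}$-independent covariance. The entropy of this Gaussian depends only on $\sigma_t$, so it contributes zero to $\nabla_{\Mat{\theta}} J_{KL}$. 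All that remains to differentiate is the cross-entropy term $-\mean_{\Mat{x}_t \sim q^{\Mat{\theta}}_t}[\log p_t(\Mat{x}_t|\Mat{y})]$.

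Next, I would handle the $\Mat{\theta}$-dependence of the sampling distribution inside this cross-entropy by the reparameterization trick: write $\Mat{x}_t = \alpha_t g(\Mat{\theta},\Mat{c}) + \sigma_t \Mat{\epsilon}$ with $\Mat{\epsilon} \sim \gauss(\Mat{0}, \Mat{I})$, so the expectation over $q^{\Mat{\theta}}_t$ becomes an expectation over the fixed noise $\Mat{\epsilon}$. Swapping the gradient with the expectations (justified by dominated convergence under the usual smoothness/integrability conditions on the score of $p_t$), a chain rule on $\log p_t(\alpha_t g(\Mat{\theta},\Mat{c}) + \sigma_t \Mat{\epsilon} \mid \Mat{y})$ yields a factor $\alpha_t \,\partial g(\Mat{\theta},\Mat{c})/\partial \Mat{\theta}$ multiplied by $\nabla_{\Mat{x}_t} \log p_t(\Mat{x}_t|\Mat{y})$.

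Finally, I would combine this with the prefactor $\omega(t)\sigma_t/\alpha_t$ appearing in $J_{KL}$; the $\alpha_t$ cancels and leaves exactly $\omega(t)\sigma_t$, matching the statement of Lemma~\ref{lem:grad_kl}. The sign is negative because we differentiated the cross-entropy that enters the KL with a minus sign.

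The only real subtlety, and therefore the main thing to watch, is the vanishing of the entropy gradient: it is tempting to worry that the Dirac $q^{\Mat{\theta}}_0$ makes the entropy of $q^{\Mat{\theta}}_t$ ill-defined at $t = 0$, but for any $t>0$ the Gaussian smoothing regularizes it into $\tfrac{D}{2}\log(2\pi e \sigma_t^2)$, which is manifestly independent of $\Mat{\theta}$. The paper already relies on exactly this observation in Eq.~\eqref{eqn:kl_equal_mle}, so I would cite that derivation rather than redo it, and then focus the proof on the reparameterization/chain-rule step. No other obstacles seem to arise.
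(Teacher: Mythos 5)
Your proof is correct and essentially follows the same route as the paper's: reparameterize the expectation over $q^{\Mat{\theta}}_t(\Mat{x}_t \mid \Mat{c},\Mat{y})$ via $\Mat{x}_t = \alpha_t g(\Mat{\theta},\Mat{c}) + \sigma_t\Mat{\epsilon}$, apply the chain rule to $\log p_t$, and observe that the $q^{\Mat{\theta}}_t$-dependent contribution vanishes. The only cosmetic difference is the ordering: you split $\KL = (\text{cross-entropy}) - H[q^{\Mat{\theta}}_t]$ up front and discard the entropy piece because the Gaussian $\gauss(\alpha_t g(\Mat{\theta},\Mat{c}), \sigma_t^2\Mat{I})$ has $\Mat{\theta}$-independent covariance, whereas the paper reparameterizes the log-ratio directly and then shows the $\nabla_{\Mat{\theta}}\log q^{\Mat{\theta}}_t$ term is identically zero because the reparameterized density evaluates to a $\Mat{\theta}$-independent quantity — these are two phrasings of the same vanishing, and both land on the same cancellation of $\alpha_t$ against the prefactor $\omega(t)\sigma_t/\alpha_t$.
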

\begin{proof}
Due to the linearity of expectation, we have:
\begin{align}
&\nabla_{\Mat{\theta}} \mean_{t \sim \uniform[0, T], \Mat{c} \sim p_c(\Mat{c})} \left[ \omega(t) \frac{\sigma_t}{\alpha_t} \KL(q^{\Mat{\theta}}_t(\Mat{x}_t | \Mat{c}, \Mat{y}) \Vert p_t(\Mat{x}_t | \Mat{y})) \right] \\
&= \mean_{t \sim \uniform[0, T], \Mat{c} \sim p_c(\Mat{c})} \left[ \omega(t) \frac{\sigma_t}{\alpha_t} \nabla_{\theta} \KL(q^{\Mat{\theta}}_t(\Mat{x}_t | \Mat{c}, \Mat{y}) \Vert p_t(\Mat{x}_t | \Mat{y})) \right] \\
&= \mean_{t \sim \uniform[0, T], \Mat{c} \sim p_c(\Mat{c})} \left[ \omega(t) \frac{\sigma_t}{\alpha_t} \nabla_{\Mat{\theta}} \mean_{\Mat{x}_t \sim q^{\Mat{\theta}}_t(\Mat{x}_t | \Mat{c}, \Mat{y})} \left[ \log\frac{q^{\Mat{\theta}}_t(\Mat{x}_t | \Mat{c}, \Mat{y})}{p_t(\Mat{x}_t | \Mat{y})} \right]\right] \label{eqn:expand_grad_kl} 
\end{align}
Fixing $t$ and $\Mat{c}$, we apply reparameterization trick:
\begin{align}
& \nabla_{\Mat{\theta}} \mean_{\Mat{x}_t \sim q^{\Mat{\theta}}_t(\Mat{x}_t | \Mat{c}, \Mat{y})} \left[ \log\frac{q^{\Mat{\theta}}_t(\Mat{x}_t | \Mat{c}, \Mat{y})}{p_t(\Mat{x}_t | \Mat{y})} \right]
= \mean_{\Mat{\epsilon} \sim \gauss(\Mat{0}, \Mat{I})} \left[ \underbrace{\nabla_{\Mat{\theta}} \log q^{\Mat{\theta}}_t(\alpha_t g(\Mat{\theta}, \Mat{c}) + \sigma_t \Mat{\epsilon} | \Mat{c}, \Mat{y})}_{(a)} - \underbrace{\nabla_{\Mat{\theta}} \log p_t(\alpha_t g(\Mat{\theta}, \Mat{c}) +  \sigma_t \Mat{\epsilon} | \Mat{y})}_{(b)} \right].
\end{align}
Notice that $q^{\Mat{\theta}}_t(\alpha_t g(\Mat{\theta}, \Mat{c}) + \sigma_t \Mat{\epsilon} | \Mat{c}, \Mat{y}) = \gauss(\Mat{\epsilon} | \Mat{0}, \Mat{I})$ by substituting to Eq. \ref{eqn:def_q_t}, which is independent of $\Mat{\theta}$. Thus $(a) = \Mat{0}$.
For term (b), by chain rule, we have:
\begin{align}
\nabla_{\Mat{\theta}} \log p_t(\alpha_t g(\Mat{\theta}, \Mat{c}) + \sigma_t \Mat{\epsilon} | \Mat{y}) = \alpha_t \frac{\partial g(\Mat{\theta}, \Mat{c})}{\partial \Mat{\theta}} \nabla \log p_t(\alpha_t g(\Mat{\theta}, \Mat{c}) + \sigma_t \Mat{\epsilon} | \Mat{y}).  
\end{align}
Plugging back to Eq. \ref{eqn:expand_grad_kl}, we obtain:
\begin{align}
&\nabla_{\Mat{\theta}} J_{KL}(\Mat{\theta})
= -\mean_{t \sim \uniform[0, T], \Mat{c} \sim p_c(\Mat{c}), \Mat{\epsilon} \sim \gauss(\Mat{0}, \Mat{I})} \left[ \omega(t) \frac{\sigma_t}{\alpha_t} \cdot \alpha_t \frac{\partial g(\Mat{\theta}, \Mat{c})}{\partial \Mat{\theta}} \nabla \log p_t(\Mat{x}_t | \Mat{y}) \right] \\
&= -\mean_{t \sim \uniform[0, T], \Mat{c} \sim p_c(\Mat{c}), \Mat{\epsilon} \sim \gauss(\Mat{0}, \Mat{I})} \left[ \omega(t) \frac{\partial g(\Mat{\theta}, \Mat{c})}{\partial \Mat{\theta}} \sigma_t 
 \nabla \log p_t(\Mat{x}_t | \Mat{y}) \right],
\end{align}
where $\Mat{x}_t = \alpha_t g(\Mat{\theta}, \Mat{c}) + \sigma_t \Mat{\epsilon}$.
\end{proof}

Below we reproduce two results, which state both SDS  (Eq. \ref{eqn:sds}) and VSD  (Eq. \ref{eqn:vsd}) optimize for $J_{KL}$.
\begin{lemma}[SDS minimizes $J_{KL}$ \cite{poole2022dreamfusion}] \label{lem:grad_sds}
For any $\Mat{\theta}$, we have $J_{SDS}(\Mat{\theta}) = J_{KL}(\Mat{\theta}) + const.$
\end{lemma}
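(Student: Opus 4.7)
The plan is to compare $\nabla_{\Mat{\theta}} J_{SDS}(\Mat{\theta})$ to the closed-form expression for $\nabla_{\Mat{\theta}} J_{KL}(\Mat{\theta})$ already obtained in Lemma \ref{lem:grad_kl}, and to show that the two gradients agree pointwise in $\Mat{\theta}$. Equality of gradients on all of $\real^N$ then immediately yields equality of the scalar objectives up to an additive constant (by the fundamental theorem of calculus along any path in $\real^N$).

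The key computation is a one-line subtraction. From Eq. \ref{eqn:sds} and Lemma \ref{lem:grad_kl}, I would write
\begin{align}
\nabla_{\Mat{\theta}} J_{SDS}(\Mat{\theta}) - \nabla_{\Mat{\theta}} J_{KL}(\Mat{\theta}) = \mean\left[ \omega(t) \frac{\partial g(\Mat{\theta}, \Mat{c})}{\partial \Mat{\theta}} \Mat{\epsilon} \right],
\end{align}
so everything reduces to showing that this residual expectation vanishes. I would argue this by conditioning: since $\Mat{\epsilon} \sim \gauss(\Mat{0}, \Mat{I})$ is sampled independently of $t$ and $\Mat{c}$, and the Jacobian $\partial g(\Mat{\theta}, \Mat{c})/\partial \Mat{\theta}$ is measurable with respect to $(\Mat{\theta}, \Mat{c})$ alone, the tower property gives
\begin{align}
\mean\left[ \omega(t) \frac{\partial g(\Mat{\theta}, \Mat{c})}{\partial \Mat{\theta}} \Mat{\epsilon} \right] = \mean_{t, \Mat{c}}\left[ \omega(t) \frac{\partial g(\Mat{\theta}, \Mat{c})}{\partial \Mat{\theta}} \right] \mean_{\Mat{\epsilon}}[\Mat{\epsilon}] = \Mat{0}.
\end{align}
Hence $\nabla_{\Mat{\theta}} J_{SDS}(\Mat{\theta}) = \nabla_{\Mat{\theta}} J_{KL}(\Mat{\theta})$ for every $\Mat{\theta}$, and the claim follows.

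There is no real obstacle here: the $-\Mat{\epsilon}$ inside the SDS gradient acts as a mean-zero control variate (motivated by Tweedie's identity $\sigma_t \nabla\log p_t(\Mat{x}_t \mid \Mat{y}) \approx -\Mat{\epsilon}$) that lowers the variance of the Monte Carlo estimator without shifting its expected value. The only care to take is to emphasize that $\Mat{\epsilon}$ is drawn independently of $t, \Mat{c}, \Mat{\theta}$, which is what licenses pulling $\mean[\Mat{\epsilon}]$ out of the expectation, and that equality of gradients on a connected domain upgrades to equality of the underlying functionals modulo a constant.
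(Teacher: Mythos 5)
Your proof is correct and follows essentially the same route as the paper's: both decompose $\nabla_{\Mat{\theta}} J_{SDS}$ into $\nabla_{\Mat{\theta}} J_{KL}$ plus a residual term $\mean[\omega(t)\,(\partial g/\partial\Mat{\theta})\,\Mat{\epsilon}]$ and kill that residual by the independence and zero mean of $\Mat{\epsilon}$. You spell out two things the paper leaves implicit (the conditioning/tower step and the fact that equal gradients on a connected domain imply equality up to an additive constant), but the underlying argument is the same.
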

\begin{proof}
It is sufficient to show $\nabla_{\Mat{\theta}} J_{SDS}(\Mat{\theta}) = \nabla_{\Mat{\theta}} J_{KL}(\Mat{\theta})$. By expansion:
\begin{align}
&\nabla_{\Mat{\theta}}J_{SDS}(\Mat{\theta}) = -\mean_{t \sim \uniform[0, T], \Mat{c} \sim p_c(\Mat{c}), \Mat{\epsilon} \sim \gauss(\Mat{0}, \Mat{I})} \left[ \omega(t) \frac{\partial g(\Mat{\theta}, \Mat{c})}{\partial \Mat{\theta}} \left(\sigma_t \nabla \log p_t(\Mat{x}_t | \Mat{y}) - \Mat{\epsilon}\right) \right] \\
&= \underbrace{-\mean_{t \sim \uniform[0, T], \Mat{c} \sim p_c(\Mat{c}), \Mat{\epsilon} \sim \gauss(\Mat{0}, \Mat{I})} \left[ \omega(t) \frac{\partial g(\Mat{\theta}, \Mat{c})}{\partial \Mat{\theta}} \sigma_t \nabla \log p_t(\Mat{x}_t | \Mat{y}) \right]}_{\nabla_{\Mat{\theta}} J_{KL}(\Mat{\theta})} + \underbrace{\mean_{t \sim \uniform[0, T], \Mat{c} \sim p_c(\Mat{c}), \Mat{\epsilon} \sim \gauss(\Mat{0}, \Mat{I})} \left[ \omega(t) \sigma_t \frac{\partial g(\Mat{\theta}, \Mat{c})}{\partial \Mat{\theta}} \Mat{\epsilon} \right],}_{=\Mat{0}}
\end{align}
where the second term equals $\Mat{0}$ because $\Mat{\epsilon}$ is zero mean and sampled independently.
\end{proof}

\begin{lemma}[Single-particle VSD minimizes $J_{KL}$ \citep{wang2023prolificdreamer}] \label{lem:grad_vsd}
For any $\Mat{\theta}$, we have $J_{VSD}(\Mat{\theta}) = J_{KL}(\Mat{\theta}) + const.$
\end{lemma}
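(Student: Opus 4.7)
The plan is to reduce $\nabla_{\Mat{\theta}} J_{VSD}(\Mat{\theta})$ to $\nabla_{\Mat{\theta}} J_{KL}(\Mat{\theta})$ and then invoke Lemma \ref{lem:grad_kl}. Since $J_{KL}$ and $J_{VSD}$ are both defined only up to an additive constant via their gradients, it suffices to show the two gradients coincide. The key structural observation I want to exploit is that in the single-particle regime, the rendering $g(\Mat{\theta}, \Mat{c})$ is deterministic, so $q_0^{\Mat{\theta}}(\Mat{x}_0 | \Mat{c}) = \delta(\Mat{x}_0 - g(\Mat{\theta}, \Mat{c}))$, and by Eq.~\eqref{eqn:def_q_t} the diffused rendered distribution collapses to a single Gaussian, $q_t^{\Mat{\theta}}(\Mat{x}_t | \Mat{c}) = \gauss(\Mat{x}_t \mid \alpha_t g(\Mat{\theta}, \Mat{c}), \sigma_t^2 \Mat{I})$. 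This is the same observation already used in Section~\ref{sec:mode_collapse} to reduce the entropy term to a constant, so I can reuse it without fresh justification.

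Given that Gaussian form, the score has an explicit closed form: $\nabla_{\Mat{x}_t} \log q_t^{\Mat{\theta}}(\Mat{x}_t | \Mat{c}) = -(\Mat{x}_t - \alpha_t g(\Mat{\theta}, \Mat{c}))/\sigma_t^2$. Evaluating this at the VSD sample $\Mat{x}_t = \alpha_t g(\Mat{\theta}, \Mat{c}) + \sigma_t \Mat{\epsilon}$ gives $\sigma_t \nabla \log q_t^{\Mat{\theta}}(\Mat{x}_t | \Mat{c}) = -\Mat{\epsilon}$. Substituting this identity into the VSD gradient Eq.~\eqref{eqn:vsd} turns the correction term into $+\Mat{\epsilon}$, so
\begin{align*}
\nabla_{\Mat{\theta}} J_{VSD}(\Mat{\theta}) = -\mean\!\left[\omega(t)\tfrac{\partial g(\Mat{\theta}, \Mat{c})}{\partial \Mat{\theta}} \sigma_t \nabla \log p_t(\Mat{x}_t | \Mat{y})\right] - \mean\!\left[\omega(t)\tfrac{\partial g(\Mat{\theta}, \Mat{c})}{\partial \Mat{\theta}} \Mat{\epsilon}\right].
\end{align*}

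The final step is to argue the second expectation vanishes. Because $\Mat{\epsilon} \sim \gauss(\Mat{0}, \Mat{I})$ is zero mean and drawn independently of $t$, $\Mat{c}$, and $\Mat{\theta}$, conditioning on $(t, \Mat{c})$ and taking the inner expectation over $\Mat{\epsilon}$ first immediately kills the term, by the same argument used in the $-\Mat{\epsilon}$ term of Lemma \ref{lem:grad_sds}. What remains is exactly the expression for $\nabla_{\Mat{\theta}} J_{KL}(\Mat{\theta})$ from Lemma \ref{lem:grad_kl}, completing the proof.

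I do not anticipate a real obstacle: the proof is essentially a one-line score substitution followed by the same zero-mean noise cancellation that proves SDS $=$ KL. The only subtle point worth stating carefully is the single-particle assumption, since $q_t^{\Mat{\theta}}(\Mat{x}_t|\Mat{c})$ being Gaussian (and hence its score being known in closed form) is precisely what fails in the multi-particle setting; this is where ``single-particle'' enters the hypothesis of the lemma and should be flagged explicitly in the proof.
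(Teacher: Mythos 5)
Your proof is correct, and it takes a genuinely different route from the paper's.

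The paper shows the extra VSD term
\begin{align*}
\mean\left[ \omega(t)\,\tfrac{\partial g(\Mat{\theta}, \Mat{c})}{\partial \Mat{\theta}}\,\sigma_t \nabla \log q^{\Mat{\theta}}_t(\Mat{x}_t | \Mat{c}, \Mat{y}) \right]
\end{align*}
vanishes by an abstract argument: after a reverse-chain-rule manipulation it rewrites the inner expectation as $\mean_{\Mat{x}_t \sim q^{\Mat{\theta}}_t}[\nabla_{\Mat{\theta}} \log q^{\Mat{\theta}}_t(\Mat{x}_t | \Mat{c}, \Mat{y})]$ and invokes the Fisher-score identity that the expected score over its own distribution is zero. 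You instead exploit the closed Gaussian form of $q^{\Mat{\theta}}_t(\cdot | \Mat{c}, \Mat{y})$ directly, observing that at the sampled point $\Mat{x}_t = \alpha_t g(\Mat{\theta}, \Mat{c}) + \sigma_t \Mat{\epsilon}$ one has $\sigma_t \nabla_{\Mat{x}_t} \log q^{\Mat{\theta}}_t(\Mat{x}_t | \Mat{c}, \Mat{y}) = -\Mat{\epsilon}$, which collapses the vanishing argument to the same zero-mean independent-noise cancellation used for SDS in Lemma~\ref{lem:grad_sds}. Your route is more elementary and makes the SDS--VSD relationship transparent: in the single-particle case VSD's correction $-\sigma_t \nabla \log q^{\Mat{\theta}}_t(\Mat{x}_t | \Mat{c}, \Mat{y}) = +\Mat{\epsilon}$ is, up to a sign flip, the same $-\Mat{\epsilon}$ term SDS subtracts, and both integrate to zero. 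The paper's route is marginally more abstract and, as its remark notes, pinpoints that the argument breaks for multi-particle VSD precisely where the reverse chain rule fails---your version isolates the same hypothesis, but via the observation that the Dirac/Gaussian form of $q^{\Mat{\theta}}_t$ is what gives the closed-form score. One small cosmetic point: your final residual term is $-\mean[\omega(t)(\partial g/\partial \Mat{\theta})\Mat{\epsilon}]$, the opposite sign of the analogous term in Lemma~\ref{lem:grad_sds}; this is consistent with your substitution and immaterial since the term is zero, but worth flagging so a reader does not suspect a dropped sign.
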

\begin{proof}
It is sufficient to show $\nabla_{\Mat{\theta}} J_{VSD}(\Mat{\theta}) = \nabla_{\Mat{\theta}} J_{KL}(\Mat{\theta})$. By a similar expansion:
\begin{align}
\nabla_{\Mat{\theta}}J_{VSD}(\Mat{\theta}) &= -\mean_{t \sim \uniform[0, T], \Mat{c} \sim p_c(\Mat{c}), \Mat{\epsilon} \sim \gauss(\Mat{0}, \Mat{I})} \left[ \omega(t) \frac{\partial g(\Mat{\theta}, \Mat{c})}{\partial \Mat{\theta}} \left(\sigma_t \nabla \log p_t(\Mat{x}_t | \Mat{y}) - \sigma_t \nabla \log q^{\Mat{\theta}}_t(\Mat{x}_t | \Mat{c}, \Mat{y}) \right) \right] \\
&= \underbrace{-\mean_{t \sim \uniform[0, T], \Mat{c} \sim p_c(\Mat{c}), \Mat{\epsilon} \sim \gauss(\Mat{0}, \Mat{I})} \left[ \omega(t) \frac{\partial g(\Mat{\theta}, \Mat{c})}{\partial \Mat{\theta}} \sigma_t \nabla \log p_t(\Mat{x}_t | \Mat{y}) \right]}_{\nabla_{\Mat{\theta}} J_{KL}(\Mat{\theta})} \\
&\quad\quad + \underbrace{\mean_{t \sim \uniform[0, T], \Mat{c} \sim p_c(\Mat{c}), \Mat{\epsilon} \sim \gauss(\Mat{0}, \Mat{I})} \left[ \omega(t) \frac{\partial g(\Mat{\theta}, \Mat{c})}{\partial \Mat{\theta}} \sigma_t \nabla \log q^{\Mat{\theta}}_t(\Mat{x}_t | \Mat{c}, \Mat{y}) \right]}_{=(a)}
\end{align}
Then we conclude the proof by showing $(a) = \Mat{0}$ due to the fact that the first-order moment of score functions equals zero:
\begin{align}
(a) &= \mean_{t \sim \uniform[0, T], \Mat{c} \sim p_c(\Mat{c})}  \left[ \omega(t)\frac{\sigma_t}{\alpha_t} \mean_{\Mat{x}_t \sim q^{\Mat{\theta}}_t(\Mat{x} | \Mat{c}, \Mat{y}) } \left[ \alpha_t \frac{\partial g(\Mat{\theta}, \Mat{c})}{\partial \Mat{\theta}} \nabla \log q^{\Mat{\theta}}_t(\Mat{x}_t | \Mat{c}, \Mat{y}) \right] \right] \\
&= \mean_{t \sim \uniform[0, T], \Mat{c} \sim p_c(\Mat{c})}  \left[ \omega(t)\frac{\sigma_t}{\alpha_t} \mean_{\Mat{x}_t \sim q^{\Mat{\theta}}_t(\Mat{x} | \Mat{c}, \Mat{y}) } \left[ \nabla_{\Mat{\theta}} \log q^{\Mat{\theta}}_t(\Mat{x}_t | \Mat{c}, \Mat{y}) \right] \right] \label{eqn:reverse_chain_rule} \\
&= \mean_{t \sim \uniform[0, T], \Mat{c} \sim p_c(\Mat{c})}  \left[ \omega(t)\frac{\sigma_t}{\alpha_t} \int \frac{\nabla_{\Mat{\theta}} q^{\Mat{\theta}}_t(\Mat{x}_t | \Mat{c}, \Mat{y})}{q^{\Mat{\theta}}_t(\Mat{x}_t | \Mat{c}, \Mat{y})} q^{\Mat{\theta}}_t(\Mat{x}_t | \Mat{c}, \Mat{y}) d\Mat{x}_t \right] \\
&= \mean_{t \sim \uniform[0, T], \Mat{c} \sim p_c(\Mat{c})}  \left[ \omega(t)\frac{\sigma_t}{\alpha_t} \nabla_{\Mat{\theta}} \int q^{\Mat{\theta}}_t(\Mat{x}_t | \Mat{c}, \Mat{y}) d\Mat{x}_t \right] = \Mat{0},
\end{align}
where we use change of variables by reversing the chain rule in Eq. \ref{eqn:reverse_chain_rule}, and the last step is because the integral equals one, which is independent of $\Mat{\theta}$.
\end{proof}
\begin{remark}
For multi-particle VSD, Lemma \ref{lem:grad_vsd} may not hold. This is because the reverse chain rule in Eq. \ref{eqn:reverse_chain_rule} is no longer applicable as $q^{\Mat{\theta}}_t(\Mat{x}_t | \Mat{c}, \Mat{y})$ also becomes a function of $\Mat{\theta}$.
\end{remark}

Finally, we show that optimizing $J_{KL}$ is equivalent to optimizing $J_{MLE}$ (Eq. \ref{eqn:kl_equal_mle}). First, recall that:
\begin{align}
J_{MLE}(\Mat{\theta}) = -\mean_{t \sim \uniform[0, T], \Mat{c} \sim p_c(\Mat{c})} \left[ \omega(t) \frac{\sigma_t}{\alpha_t} \mean_{\Mat{x}_t \sim q^{\Mat{\theta}}_t(\Mat{x}_t | \Mat{c}, \Mat{y})} \left[\log p_t(\Mat{x}_t | \Mat{y})\right] \right].
\end{align}
Then we state the following lemma:
\begin{lemma}[$J_{KL}$ is equivalent to maximal likelihood estimation] \label{lem:grad_mle}
For any $\Mat{\theta}$, we have $J_{MLE}(\Mat{\theta}) = J_{KL}(\Mat{\theta}) + const.$
\end{lemma}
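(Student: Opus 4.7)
The plan is to start from the definition of $J_{KL}(\Mat{\theta})$, split the KL divergence into its cross-entropy and entropy parts, and then show that the entropy part does not depend on $\Mat{\theta}$. Concretely, by the standard identity $\KL(q \Vert p) = -\mean_{\Mat{x} \sim q}[\log p(\Mat{x})] - H[q]$, we have
\begin{align*}
J_{KL}(\Mat{\theta}) &= \mean_{t, \Mat{c}}\!\left[\Omega(t)\, \KL\!\bigl(q_t^{\Mat{\theta}}(\Mat{x}_t | \Mat{c}, \Mat{y}) \,\Vert\, p_t(\Mat{x}_t | \Mat{y})\bigr)\right] \\
&= J_{MLE}(\Mat{\theta}) \;-\; \mean_{t, \Mat{c}}\!\left[\Omega(t)\, H[q_t^{\Mat{\theta}}(\Mat{x}_t | \Mat{c}, \Mat{y})]\right].
\end{align*}
So the lemma reduces to showing the second term is a constant in $\Mat{\theta}$.

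Next I would compute $q_t^{\Mat{\theta}}(\Mat{x}_t | \Mat{c}, \Mat{y})$ explicitly. Using the assumption stated just above Lemma~\ref{lem:grad_kl} that the rendering is deterministic, $q_0^{\Mat{\theta}}(\Mat{x}_0 | \Mat{c}) = \delta(\Mat{x}_0 - g(\Mat{\theta}, \Mat{c}))$, the defining integral in Eq.~\eqref{eqn:def_q_t} collapses under the Dirac measure to give $q_t^{\Mat{\theta}}(\Mat{x}_t | \Mat{c}, \Mat{y}) = \gauss(\Mat{x}_t \mid \alpha_t g(\Mat{\theta}, \Mat{c}),\, \sigma_t^2 \Mat{I})$. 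Then I would invoke the closed-form entropy of an isotropic Gaussian in $\real^D$, namely $H[\gauss(\Mat{\mu}, \sigma^2 \Mat{I})] = \tfrac{D}{2} \log(2\pi e \sigma^2)$, which crucially depends only on the covariance $\sigma_t^2$ and not on the mean $\alpha_t g(\Mat{\theta}, \Mat{c})$. Hence $H[q_t^{\Mat{\theta}}(\Mat{x}_t | \Mat{c}, \Mat{y})] = \tfrac{D}{2}\log(2\pi e \sigma_t^2)$, which is a function of $t$ alone.

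Finally, I would plug this back in and conclude that $\mean_{t, \Mat{c}}[\Omega(t) H[q_t^{\Mat{\theta}}(\Mat{x}_t | \Mat{c}, \Mat{y})]] = \tfrac{D}{2}\mean_t[\Omega(t) \log(2\pi e \sigma_t^2)]$ is a $\Mat{\theta}$-independent constant, so $J_{KL}(\Mat{\theta}) = J_{MLE}(\Mat{\theta}) + const$, as desired. There is no substantive obstacle here; the only subtle point worth flagging is the use of a Dirac $q_0^{\Mat{\theta}}$, which has ill-defined (diverging) differential entropy. This is harmless because the entropy in question is evaluated at the Gaussian-smoothed time $t > 0$, where $q_t^{\Mat{\theta}}(\cdot | \Mat{c}, \Mat{y})$ is a bona fide density with finite differential entropy; it is also consistent with the convention in Sec.~\ref{sec:mode_collapse} that the time average is taken over $t \sim \uniform[0, T]$ with the $t = 0$ endpoint contributing measure zero.
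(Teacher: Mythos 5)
Your proof is correct and takes a genuinely different route from the paper's appendix proof. You argue at the level of the objectives themselves: decompose $J_{KL}$ via the identity $\KL(q\Vert p) = -\mean_{q}[\log p] - H[q]$ into $J_{MLE}(\Mat{\theta})$ minus $\mean_{t,\Mat{c}}\!\left[\Omega(t)\, H[q_t^{\Mat{\theta}}(\cdot\,|\,\Mat{c},\Mat{y})]\right]$, and then observe that $q_t^{\Mat{\theta}}(\cdot\,|\,\Mat{c},\Mat{y})$ is Gaussian with covariance $\sigma_t^2\Mat{I}$, so its differential entropy $\tfrac{D}{2}\log(2\pi e\sigma_t^2)$ is mean-invariant and hence independent of $\Mat{\theta}$. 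This makes rigorous the sketch the paper gives in the main text around Eq.~\eqref{eqn:kl_equal_mle}, and it goes slightly further: it names the constant explicitly as $\tfrac{D}{2}\mean_t[\Omega(t)\log(2\pi e\sigma_t^2)]$ and pins down that the entropy depends only on $t$, which sharpens the main text's looser phrasing ``fixed entropy once $t$, $\Mat{\theta}$ and $\Mat{c}$ have been specified.'' By contrast, the paper's own proof of Lemma~\ref{lem:grad_mle} in Appendix~\ref{sec:proofs_kl} never invokes the cross-entropy/entropy split: it differentiates $J_{MLE}$ directly under the reparameterization $\Mat{x}_t = \alpha_t g(\Mat{\theta},\Mat{c}) + \sigma_t\Mat{\epsilon}$, applies the chain rule, and matches the resulting expression to $\nabla_{\Mat{\theta}} J_{KL}$ from Lemma~\ref{lem:grad_kl}; gradient equality on the connected domain $\real^N$ then yields equality of the objectives up to an unnamed additive constant. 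Your version is more transparent and self-contained, while the paper's is stylistically uniform with the neighboring Lemmas~\ref{lem:grad_sds} and~\ref{lem:grad_vsd}, which follow the same gradient-matching template. Your caveat about the Dirac $q_0^{\Mat{\theta}}$ having divergent differential entropy is a fair observation, though it is essentially moot here since the entropy in the decomposition is always that of the smoothed time-$t$ Gaussian.
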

\begin{proof}
Again, we show $\nabla_{\Mat{\theta}} J_{MLE}(\Mat{\theta}) = \nabla_{\Mat{\theta}} J_{KL}(\Mat{\theta})$:
\begin{align}
\nabla_{\Mat{\theta}}J_{MLE}(\Mat{\theta}) &= \nabla_{\Mat{\theta}} -\mean_{t \sim \uniform[0, T], \Mat{c} \sim p_c(\Mat{c})} \left[ \omega(t) \frac{\sigma_t}{\alpha_t} \mean_{\Mat{x}_t \sim q^{\Mat{\theta}}_t(\Mat{x}_t | \Mat{c}, \Mat{y})} \left[\log p_t(\Mat{x}_t | \Mat{y})\right] \right] \\
&= -\mean_{t \sim \uniform[0, T], \Mat{c} \sim p_c(\Mat{c})} \left[ \omega(t) \frac{\sigma_t}{\alpha_t} \mean_{\Mat{x}_t \sim q^{\Mat{\theta}}_t(\Mat{x}_t | \Mat{c}, \Mat{y})} \left[\nabla_{\Mat{\theta}} 
 \log p_t(\Mat{x}_t | \Mat{y})\right] \right] \\
&= -\mean_{t \sim \uniform[0, T], \Mat{c} \sim p_c(\Mat{c})} \left[ \omega(t) \frac{\sigma_t}{\alpha_t} \mean_{\Mat{x}_t \sim q^{\Mat{\theta}}_t(\Mat{x}_t | \Mat{c}, \Mat{y})} \left[\alpha_t \frac{\partial g(\Mat{\theta}, \Mat{c})}{\partial \Mat{\theta}} \nabla \log p_t(\Mat{x}_t | \Mat{y})\right] \right] \\
&= -\mean_{t \sim \uniform[0, T], \Mat{c} \sim p_c(\Mat{c}), \Mat{\epsilon} \sim \gauss(\Mat{0}, \Mat{I})} \left[ \omega(t) \frac{\partial g(\Mat{\theta}, \Mat{c})}{\partial \Mat{\theta}} \sigma_t \nabla \log p_t(\Mat{x}_t | \Mat{y}) \right],
\end{align}
where the last step is basic reparameterization of $\Mat{x}_t = \alpha_t \Mat{x}_0 + \sigma_t \Mat{\epsilon}$, and $\Mat{x}_0 = g(\Mat{\theta}, \Mat{c})$.
\end{proof}
As we argue in Sec. \ref{sec:mode_collapse} (Eq. \ref{eqn:kl_equal_mle}), the root reason $J_{KL}$ degenerates to $J_{MLE}$ is because the entropy term in $J_{KL}$ becomes a constant independent of $\Mat{\theta}$.

\subsection{Derivation of Entropic Score Distillation}
\label{sec:proofs_esd}

In this section, we derive the gradient for our entropy regularized objective (Eq. \ref{eqn:esd}). We restate the entropy regularized objective (Eq. \ref{eqn:kl_ent}) below:
\begin{align}
J_{Ent}(\Mat{\theta}, \lambda) = -\mean_{t \sim \uniform[0, T], \Mat{c} \sim p_c(\Mat{c})} \left[\omega(t) \frac{\sigma_t}{\alpha_t} \mean_{\Mat{x_t} \sim q_t^{\Mat{\theta}}(\Mat{x}_t | \Mat{c}, \Mat{y})} \log p_t(\Mat{x}_t | \Mat{y}) \right] - \lambda \mean_{t \sim \uniform[0, T]} \left[\omega(t)  \frac{\sigma_t}{\alpha_t} H[q_t^{\Mat{\theta}}(\Mat{x}_t | \Mat{y})]\right],
\end{align}
where the entropy term $H[q_t^{\Mat{\theta}}(\Mat{x}_t | \Mat{y})]$ is defined as:
\begin{align}
H\left[q^{\Mat{\theta}}_t(\Mat{x}_t | \Mat{y})\right] = -\mean_{\Mat{x}_t \sim q^{\Mat{\theta}}_t(\Mat{x}_t | \Mat{y})} \left[ \log q^{\Mat{\theta}}_t(\Mat{x}_t | \Mat{y}) \right],
\end{align}
and distribution $q^{\Mat{\theta}}_t(\Mat{x}_t | \Mat{y})$ is defined as:
\begin{align}
q^{\Mat{\theta}}_t(\Mat{x}_t | \Mat{y}) = \int q^{\Mat{\theta}}_t(\Mat{x}_t | \Mat{c}, \Mat{y}) p_c(\Mat{c}) d\Mat{c}.
\end{align}
Notice that $J_{Ent}(\Mat{\theta}, \lambda) = J_{MLE}(\Mat{\theta}) - \lambda \mean_{t \sim \uniform[0, T]} \left[\omega(t)  \frac{\sigma_t}{\alpha_t} H[q_t^{\Mat{\theta}}(\Mat{x}_t | \Mat{y})]\right]$, therefore, to derive Eq. \ref{eqn:esd}, we simply need the gradient of the entropy term:
\begin{lemma}[Gradient of entropy] \label{lem:grad_ent}
It holds that:
\begin{align}
\nabla_{\Mat{\theta}} H\left[q_t^{\Mat{\theta}}(\Mat{x}_t | \Mat{y})\right] = -\mean_{\Mat{c} \sim p_c(\Mat{c}), \Mat{\epsilon} \sim \gauss(\Mat{0}, \Mat{I})} \left[ \alpha_t \frac{\partial g(\Mat{\theta}, \Mat{c})}{\partial \Mat{\theta}} \nabla \log q^{\Mat{\theta}}_t(\Mat{x}_t | \Mat{y}) \right].
\end{align}
\end{lemma}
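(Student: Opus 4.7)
The plan is to compute $\nabla_{\Mat{\theta}} H[q^{\Mat{\theta}}_t(\Mat{x}_t | \Mat{y})]$ by first applying the reparameterization trick, and then carefully separating the two ways the parameter $\Mat{\theta}$ enters the expression: through the sampled variable $\Mat{x}_t$, and through the density $q^{\Mat{\theta}}_t$ itself. Since $\Mat{x}_t = \alpha_t g(\Mat{\theta}, \Mat{c}) + \sigma_t \Mat{\epsilon}$ with $\Mat{c} \sim p_c$ and $\Mat{\epsilon} \sim \gauss(\Mat{0},\Mat{I})$, we can rewrite $H[q^{\Mat{\theta}}_t(\Mat{x}_t | \Mat{y})] = -\mean_{\Mat{c},\Mat{\epsilon}}[\log q^{\Mat{\theta}}_t(\alpha_t g(\Mat{\theta},\Mat{c}) + \sigma_t \Mat{\epsilon} | \Mat{y})]$, where now the sampling distribution no longer depends on $\Mat{\theta}$, so the gradient can be moved inside the expectation.

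Next, I would apply the total derivative to the integrand. Let $\nabla \log q^{\Mat{\theta}}_t$ denote the gradient with respect to the spatial argument, and let $\nabla_{\Mat{\theta}}^{\mathrm{par}} \log q^{\Mat{\theta}}_t(\Mat{x}_t | \Mat{y})$ denote the gradient with respect to the parameter holding $\Mat{x}_t$ fixed. By the chain rule, $\nabla_{\Mat{\theta}} \log q^{\Mat{\theta}}_t(\alpha_t g(\Mat{\theta},\Mat{c}) + \sigma_t \Mat{\epsilon} | \Mat{y}) = \alpha_t \tfrac{\partial g(\Mat{\theta},\Mat{c})}{\partial \Mat{\theta}} \nabla \log q^{\Mat{\theta}}_t(\Mat{x}_t | \Mat{y}) + \nabla_{\Mat{\theta}}^{\mathrm{par}} \log q^{\Mat{\theta}}_t(\Mat{x}_t | \Mat{y})$. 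The first piece is the path-derivative contribution; it will supply the target formula after taking expectations and prepending the minus sign.

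The key lemma I need is that the second piece vanishes in expectation. This follows from the standard score-function identity: since $\mean_{\Mat{c},\Mat{\epsilon}}[f(\Mat{x}_t)] = \mean_{\Mat{x}_t \sim q^{\Mat{\theta}}_t(\cdot|\Mat{y})}[f(\Mat{x}_t)]$ by construction of $q^{\Mat{\theta}}_t(\Mat{x}_t|\Mat{y})$, we get $\mean_{\Mat{c},\Mat{\epsilon}}[\nabla_{\Mat{\theta}}^{\mathrm{par}} \log q^{\Mat{\theta}}_t(\Mat{x}_t|\Mat{y})] = \int q^{\Mat{\theta}}_t(\Mat{x}|\Mat{y}) \tfrac{\nabla_{\Mat{\theta}} q^{\Mat{\theta}}_t(\Mat{x}|\Mat{y})}{q^{\Mat{\theta}}_t(\Mat{x}|\Mat{y})} d\Mat{x} = \nabla_{\Mat{\theta}} \int q^{\Mat{\theta}}_t(\Mat{x}|\Mat{y}) d\Mat{x} = \Mat{0}$. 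This is exactly the trick already used in the proof of Lemma~\ref{lem:grad_vsd}. Combining the two pieces then yields the claimed identity.

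The main obstacle is bookkeeping: one must be precise about what "gradient with respect to $\Mat{\theta}$" means when both the argument of the log and the density itself depend on $\Mat{\theta}$, since sloppiness here would either double-count or drop the nontrivial path-derivative term. Everything else (linearity of the expectation, interchange of differentiation and integration, and the score-function identity $\mean[\nabla \log q] = 0$) is a standard manipulation that mirrors derivations already given in Appendix~\ref{sec:proofs_kl}.
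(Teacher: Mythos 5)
Your proposal matches the paper's proof: both reparameterize the entropy as an expectation over $\Mat{c}$ and $\Mat{\epsilon}$, split the total $\Mat{\theta}$-gradient into the path-derivative term (chain rule through $\Mat{x}_t = \alpha_t g(\Mat{\theta},\Mat{c}) + \sigma_t\Mat{\epsilon}$) and the parameter-only gradient of the density, and then kill the latter with the score-function identity $\mean_{\Mat{x}_t \sim q^{\Mat{\theta}}_t}[\nabla_{\Mat{\theta}} \log q^{\Mat{\theta}}_t(\Mat{x}_t|\Mat{y})] = \Mat{0}$. This is exactly the derivation in Appendix~\ref{sec:proofs_esd}, so there is nothing to add.
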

\begin{proof}
We expand entropy by reparameterization of $q_t^{\Mat{\theta}}(\Mat{x}_t | \Mat{y})$ as sampling two independent variables $\Mat{c}, \Mat{\epsilon}$:
\begin{align}
&\nabla_{\Mat{\theta}} H\left[q_t^{\Mat{\theta}}(\Mat{x}_t | \Mat{y})\right] = \nabla_{\Mat{\theta}} \mean_{\Mat{c} \sim p_c(\Mat{c}), \Mat{\epsilon} \sim \gauss(\Mat{0}, \Mat{I})} \left[-\log q_t^{\Mat{\theta}}(\alpha_t g(\Mat{\theta}, \Mat{c}) + \sigma_t \Mat{\epsilon} | \Mat{y})\right] \\
&= -\mean_{\Mat{c} \sim p_c(\Mat{c}), \Mat{\epsilon} \sim \gauss(\Mat{0}, \Mat{I})} \left.\left[\nabla_{\Mat{\theta}} \log q^{\Mat{\theta}}_t(\Mat{x}_t | \Mat{y}) + \alpha_t \frac{\partial g(\Mat{\theta}, \Mat{c})}{\partial \Mat{\theta}} \nabla_{\Mat{x}_t} \log q^{\Mat{\theta}}_t(\Mat{x}_t | \Mat{y}) \right]\right\vert_{\Mat{x}_t = \alpha_t g(\Mat{\theta}, \Mat{c}) + \sigma_t \Mat{\epsilon}} \label{eqn:path_derivative} \\
&= -\underbrace{\mean_{\Mat{x}_t \sim q^{\Mat{\theta}}_t(\Mat{x}_t | \Mat{y})} \left[\nabla_{\Mat{\theta}}\log q^{\Mat{\theta}}_t(\Mat{x}_t | \Mat{y})\right]}_{= (a)} - \mean_{\Mat{c} \sim p_c(\Mat{c}), \Mat{\epsilon} \sim \gauss(\Mat{0}, \Mat{I})} \left[\alpha_t \frac{\partial g(\Mat{\theta}, \Mat{c})}{\partial \Mat{\theta}} \nabla_{\Mat{x}_t} \log q^{\Mat{\theta}}_t(\alpha_t g(\Mat{\theta}, \Mat{c}) + \sigma_t \Mat{\epsilon} | \Mat{\theta}) \right],
\end{align}
where it is noteworthy that $\nabla_{\Mat{x}_t} \log q^{\Mat{\theta}}_t$ simply denotes the score function of $q^{\Mat{\theta}}_t$ by explicitly indicating the derivative is taken in terms of $\Mat{x}_t$. Eq. \ref{eqn:path_derivative} is obtained by path derivative.
It remains to show $(a) = \Mat{0}$. We recall that the first-order moment of a score function equals to zero:
\begin{align}
(a) &= \int \nabla_{\Mat{\theta}}\log q^{\Mat{\theta}}_t(\Mat{x}_t | \Mat{y}) q^{\Mat{\theta}}_t(\Mat{x}_t | \Mat{y}) d\Mat{x}_t = \int \frac{\nabla_{\Mat{\theta}} q^{\Mat{\theta}}_t(\Mat{x}_t | \Mat{y})}{q^{\Mat{\theta}}_t(\Mat{x}_t | \Mat{y})} q^{\Mat{\theta}}_t(\Mat{x}_t | \Mat{y}) d\Mat{x}_t \\
&= \nabla_{\Mat{\theta}} \int q^{\Mat{\theta}}_t(\Mat{x}_t | \Mat{y}) d\Mat{x}_t \\
&= \Mat{0},
\end{align}
where the last step involves a change of variable and the integral turns out to be independent of $\Mat{\theta}$.
\end{proof}
As a consequence, we can conclude the update rule yielded by Eq. \ref{eqn:kl_ent} in the following theorem:
\begin{theorem}[Entropic Score Distillation] \label{thm:grad_esd}
For any $\Mat{\theta}$ and $\lambda \in \real$, the following holds:
\begin{align}
\nabla_{\Mat{\theta}} J_{Ent}(\Mat{\theta}, \lambda) = -\mean_{t \sim \uniform[0, T], \Mat{c} \sim p_c(\Mat{c}), \Mat{\epsilon} \sim \gauss(\Mat{0}, \Mat{I})} \left[ \omega(t) \frac{\partial g(\Mat{\theta}, \Mat{c})}{\partial \Mat{\theta}} \left(\sigma_t \nabla \log p_t(\Mat{x}_t | \Mat{y}) - \lambda \sigma_t \nabla \log q^{\Mat{\theta}}_t(\Mat{x}_t | \Mat{y}) \right) \right],
\end{align}
where $\Mat{x}_t = \alpha_t \Mat{x}_0 + \sigma_t \Mat{\epsilon}$, and $\Mat{x}_0 = g(\Mat{\theta}, \Mat{c})$.
\end{theorem}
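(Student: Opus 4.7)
The plan is to recognize that $J_{Ent}(\Mat{\theta}, \lambda)$ is, by construction, an affine combination of two pieces whose gradients have already been computed in the excerpt's lemmas, so the proof is essentially an assembly job. Specifically, $J_{Ent}(\Mat{\theta}, \lambda) = J_{MLE}(\Mat{\theta}) - \lambda \mean_{t \sim \uniform[0, T]}\left[\omega(t)\frac{\sigma_t}{\alpha_t} H[q^{\Mat{\theta}}_t(\Mat{x}_t|\Mat{y})]\right]$, so by linearity of the gradient operator I just need to combine Lemma \ref{lem:grad_mle} (gradient of the MLE piece) with Lemma \ref{lem:grad_ent} (gradient of the entropy piece).

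First I would invoke Lemma \ref{lem:grad_mle}, which gives
\[
\nabla_{\Mat{\theta}}J_{MLE}(\Mat{\theta}) = -\mean\!\left[\omega(t)\frac{\partial g(\Mat{\theta},\Mat{c})}{\partial \Mat{\theta}}\,\sigma_t \nabla\log p_t(\Mat{x}_t|\Mat{y})\right],
\]
with $\Mat{x}_t = \alpha_t g(\Mat{\theta},\Mat{c}) + \sigma_t \Mat{\epsilon}$. Next I would pull the $\nabla_{\Mat{\theta}}$ inside the $\mean_t$ (timestep sampling is $\Mat{\theta}$-independent) and apply Lemma \ref{lem:grad_ent} to each realization of $t$, obtaining
\[
\mean_t\!\left[\omega(t)\frac{\sigma_t}{\alpha_t}\nabla_{\Mat{\theta}}H[q^{\Mat{\theta}}_t(\Mat{x}_t|\Mat{y})]\right] = -\mean\!\left[\omega(t)\frac{\sigma_t}{\alpha_t}\cdot\alpha_t\frac{\partial g(\Mat{\theta},\Mat{c})}{\partial \Mat{\theta}}\nabla\log q^{\Mat{\theta}}_t(\Mat{x}_t|\Mat{y})\right].
\]
The factor of $\alpha_t$ from the chain rule in Lemma \ref{lem:grad_ent} cancels the $1/\alpha_t$ in $\Omega(t)$, leaving a clean $\sigma_t$ coefficient. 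Summing the two contributions with weights $1$ and $-\lambda$ yields exactly the stated formula.

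I do not anticipate a genuine obstacle: the two nontrivial manipulations (the path-derivative argument for entropy, and the vanishing of the score expectation $\mean[\nabla_{\Mat{\theta}}\log q^{\Mat{\theta}}_t(\Mat{x}_t|\Mat{y})] = \Mat{0}$) were already absorbed into Lemma \ref{lem:grad_ent}. The only thing to watch carefully is the algebraic bookkeeping of the $\alpha_t$, $\sigma_t$, and $\omega(t)$ factors so that the final coefficient in front of $\nabla\log q^{\Mat{\theta}}_t$ matches $\lambda \sigma_t$ rather than $\lambda \sigma_t/\alpha_t$ or $\lambda \omega(t)\sigma_t/\alpha_t$. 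Beyond that, the proof is a two-line combination, and in fact exposes why the entropy regularizer contributes a score-function term of precisely the same structural form as $\nabla\log p_t$, which is what later enables the CFG-style implementation in Sec. \ref{sec:cfg_trick}.
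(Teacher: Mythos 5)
Your proposal is correct and follows exactly the same route as the paper: decompose $J_{Ent}(\Mat{\theta},\lambda)$ into $J_{MLE}(\Mat{\theta})$ plus the entropy term, invoke Lemma~\ref{lem:grad_mle} and Lemma~\ref{lem:grad_ent}, cancel the $\alpha_t$ against the $1/\alpha_t$ in $\Omega(t)$, and merge the two expectations. The sign bookkeeping you flag as the only delicate point is also handled correctly (the $-\lambda$ weight times the $-$ from Lemma~\ref{lem:grad_ent} gives the $+\lambda$ that folds into the $-\lambda\sigma_t$ coefficient inside the overall negated expectation).
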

\begin{proof}
Since $J_{Ent}(\Mat{\theta}, \lambda) = J_{MLE}(\Mat{\theta}) - \lambda \mean_{t \sim \uniform[0, T]} \left[\omega(t)  \frac{\sigma_t}{\alpha_t} H[q_t^{\Mat{\theta}}(\Mat{x}_t | \Mat{y})]\right]$, by Lemma \ref{lem:grad_mle} and Lemma \ref{lem:grad_ent}:
\begin{align}
\nabla_{\Mat{\theta}} J_{Ent}(\Mat{\theta}, \lambda) &= \nabla_{\Mat{\theta}} J_{MLE}(\Mat{\theta}) - \lambda \mean_{t \sim \uniform[0, T]} \left[\omega(t)  \frac{\sigma_t}{\alpha_t} \nabla_{\Mat{\theta}} H[q_t^{\Mat{\theta}}(\Mat{x}_t | \Mat{y})]\right] \\
&= -\mean_{t \sim \uniform[0, T], \Mat{c} \sim p_c(\Mat{c}), \Mat{\epsilon} \sim \gauss(\Mat{0}, \Mat{I})} \left[ \omega(t) \frac{\partial g(\Mat{\theta}, \Mat{c})}{\partial \Mat{\theta}} \sigma_t \nabla \log p_t(\Mat{x}_t | \Mat{y}) \right] \\
&\quad \quad + \mean_{t \sim \uniform[0, T], \Mat{c} \sim p_c(\Mat{c}), \Mat{\epsilon} \sim \gauss(\Mat{0}, \Mat{I})} \left[ \omega(t) \frac{\partial g(\Mat{\theta}, \Mat{c})}{\partial \Mat{\theta}} \lambda \sigma_t \nabla \log q^{\Mat{\theta}}_t(\Mat{x}_t | \Mat{y}) \right],
\end{align}
by which we conclude the proof by merging two expectations.
\end{proof}

\subsection{Justification of Classifier-Free Guidance Trick}
\label{sec:proofs_cfg}

In this section, we first prove Theorem \ref{thm:ent_equal_kl} and show that CFG trick (Eq. \ref{eqn:esd_cfg}) can be utilized to implement ESD.
To begin with, we define another type of KL divergence as below:
\begin{align}
    \overline{J_{KL}} = \mean_{t \sim \uniform[0, T]} \left[\omega(t) \frac{\sigma_t}{\alpha_t} \KL(q^{\Mat{\theta}}_t(\Mat{x}_t | \Mat{y}) \Vert p_t(\Mat{x}_t | \Mat{y}))\right]
\end{align}
Then we present the following lemma, which represents the gradient of $\overline{J_{KL}}$:
\begin{lemma}[Gradient of $\overline{J_{KL}}$] \label{lem:grad_kl_ii}
It holds that:
\begin{align}
\nabla_{\Mat{\theta}} \overline{J_{KL}} = -\mean_{t \sim \uniform[0, T], \Mat{c} \sim p_c(\Mat{c}), \Mat{\epsilon} \sim \gauss(\Mat{0}, \Mat{I})} \left[ \omega(t) \frac{\partial g(\Mat{\theta}, \Mat{c})}{\partial \Mat{\theta}} \left(\sigma_t \nabla \log p_t(\Mat{x}_t | \Mat{y}) - \sigma_t \nabla \log q^{\Mat{\theta}}_t(\Mat{x}_t | \Mat{y}) \right) \right],
\end{align}
where $\Mat{x}_t = \alpha_t \Mat{x}_0 + \sigma_t \Mat{\epsilon}$, and $\Mat{x}_0 = g(\Mat{\theta}, \Mat{c})$.
\end{lemma}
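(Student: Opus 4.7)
My plan is to mimic the proof of Lemma \ref{lem:grad_kl} but observe the key structural difference: once the camera pose $\Mat{c}$ has been marginalized out of $q_t^{\Mat{\theta}}(\Mat{x}_t \mid \Mat{y})$, the entropy $H[q_t^{\Mat{\theta}}(\Mat{x}_t \mid \Mat{y})]$ is no longer constant in $\Mat{\theta}$, so the analogue of step (a) in Lemma \ref{lem:grad_kl} does not vanish and instead yields the desired $\nabla \log q_t^{\Mat{\theta}}(\Mat{x}_t \mid \Mat{y})$ term. Concretely, I would push the gradient inside the outer expectation over $t$ by linearity and then expand the inner KL as
\[
\KL\bigl(q_t^{\Mat{\theta}}(\Mat{x}_t \mid \Mat{y}) \,\Vert\, p_t(\Mat{x}_t \mid \Mat{y})\bigr) = -H[q_t^{\Mat{\theta}}(\Mat{x}_t \mid \Mat{y})] - \mean_{\Mat{x}_t \sim q_t^{\Mat{\theta}}(\Mat{x}_t \mid \Mat{y})}\bigl[\log p_t(\Mat{x}_t \mid \Mat{y})\bigr],
\]
handling the two summands separately.

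For the cross-entropy summand, I would use the fact that sampling $\Mat{x}_t$ from $q_t^{\Mat{\theta}}(\Mat{x}_t \mid \Mat{y})$ is equivalent to sampling $\Mat{c} \sim p_c$ and $\Mat{\epsilon} \sim \gauss(\Mat{0}, \Mat{I})$ and setting $\Mat{x}_t = \alpha_t g(\Mat{\theta}, \Mat{c}) + \sigma_t \Mat{\epsilon}$. Applying reparameterization and the chain rule exactly as in Lemma \ref{lem:grad_mle} gives
\[
\nabla_{\Mat{\theta}} \mean_{\Mat{x}_t \sim q_t^{\Mat{\theta}}(\Mat{x}_t \mid \Mat{y})}\bigl[\log p_t(\Mat{x}_t \mid \Mat{y})\bigr] = \mean_{\Mat{c}, \Mat{\epsilon}}\left[\alpha_t \frac{\partial g(\Mat{\theta}, \Mat{c})}{\partial \Mat{\theta}} \nabla \log p_t(\Mat{x}_t \mid \Mat{y})\right].
\]

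For the entropy summand, I would directly invoke Lemma \ref{lem:grad_ent}, which yields
\[
-\nabla_{\Mat{\theta}} H[q_t^{\Mat{\theta}}(\Mat{x}_t \mid \Mat{y})] = \mean_{\Mat{c}, \Mat{\epsilon}}\left[\alpha_t \frac{\partial g(\Mat{\theta}, \Mat{c})}{\partial \Mat{\theta}} \nabla \log q_t^{\Mat{\theta}}(\Mat{x}_t \mid \Mat{y})\right].
\]
Subtracting the first identity from the second, multiplying by the weight $\omega(t)\sigma_t/\alpha_t$, and taking the outer expectation over $t$ collapses the $\alpha_t$ factor and recovers exactly the stated formula.

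The main technical subtlety, which I expect to be the only real obstacle, lies in justifying the entropy gradient used above: it requires a path-derivative argument showing that the ``score-of-own-distribution'' score function has zero expectation, so only the term where the gradient flows through the sample $\Mat{x}_t$ survives. Since this is precisely what Lemma \ref{lem:grad_ent} already establishes, here it reduces to a clean invocation, and the rest of the proof is a linear-combination bookkeeping step analogous to Lemma \ref{lem:grad_vsd}.
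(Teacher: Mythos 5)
Your proposal is correct and matches the paper's proof in substance: the paper also decomposes $\overline{J_{KL}}$ into the $J_{MLE}$ (cross-entropy) piece and the entropy piece $-H[q_t^{\Mat{\theta}}(\Mat{x}_t\mid\Mat{y})]$, identifies the result as $J_{Ent}(\Mat{\theta},1)$, and then reads off the gradient from Theorem \ref{thm:grad_esd}, which is itself proved by combining Lemma \ref{lem:grad_mle} and Lemma \ref{lem:grad_ent}. You simply inline the two gradient lemmas instead of going through the $J_{Ent}(\Mat{\theta},1)$ identification, so the bookkeeping is the same.
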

\begin{proof}
We prove by showing $\overline{J_{KL}}$ is a special case of $J_{Ent}$ when setting $\lambda = 1$:
\begin{align}
& \overline{J_{KL}}(\Mat{\theta}) = \mean_{t \sim \uniform[0, T], \Mat{c} \sim p_c(\Mat{c}), \Mat{\epsilon} \sim \gauss(\Mat{0}, \Mat{I})} \left[\omega(t) \frac{\sigma_t}{\alpha_t} \log \frac{q^{\Mat{\theta}}_t(\Mat{x}_t | \Mat{y})}{p_t(\Mat{x}_t | \Mat{y}))}\right] \\
&= -\mean_{t \sim \uniform[0, T], \Mat{c} \sim p_c(\Mat{c}), \Mat{\epsilon} \sim \gauss(\Mat{0}, \Mat{I})} \left[\omega(t) \frac{\sigma_t}{\alpha_t} \log p_t(\Mat{x}_t | \Mat{y}))\right] + \mean_{t \sim \uniform[0, T], \Mat{c} \sim p_c(\Mat{c}), \Mat{\epsilon} \sim \gauss(\Mat{0}, \Mat{I})} \left[\omega(t) \frac{\sigma_t}{\alpha_t} \log q^{\Mat{\theta}}_t(\Mat{x}_t | \Mat{y})\right] \\
&= J_{MLE}(\Mat{\theta}) + \mean_{t \sim \uniform[0, T]} \mean_{\Mat{x}_t \sim q^{\Mat{\theta}}_t(\Mat{x}_t | \Mat{y})} \left[\omega(t) \frac{\sigma_t}{\alpha_t} \log q^{\Mat{\theta}}_t(\Mat{x}_t | \Mat{y})\right] \\
&= J_{MLE}(\Mat{\theta}) -  \mean_{t \sim \uniform[0, T]} \left[ \omega(t) \frac{\sigma_t}{\alpha_t} H\left[q^{\Mat{\theta}}_t(\Mat{x}_t | \Mat{y})\right]\right] = J_{Ent}(\Mat{\theta}, 1)
\end{align}
\end{proof}

Now we prove Theorem \ref{thm:ent_equal_kl} using previous results:
\begin{proof}[Proof of Theorem \ref{thm:ent_equal_kl}]
It is sufficient to show that $\nabla_{\Mat{\theta}} J_{Ent}(\Mat{\theta}, \lambda) = \lambda \nabla_{\Mat{\theta}} \overline{J_{KL}}(\Mat{\theta})  + (1-\lambda) \nabla_{\Mat{\theta}} J_{KL}(\Mat{\theta})$.
By Lemma \ref{lem:grad_kl}, \ref{lem:grad_kl_ii}, as well as Theorem \ref{thm:grad_esd}, we can obtain:
\begin{align}
\lambda \nabla_{\Mat{\theta}} \overline{J_{KL}}(\Mat{\theta})  + (1-\lambda) \nabla_{\Mat{\theta}} J_{KL}(\Mat{\theta}) &= -\lambda \mean_{t \sim \uniform[0, T], \Mat{c} \sim p_c(\Mat{c}), \Mat{\epsilon} \sim \gauss(\Mat{0}, \Mat{I})} \left[ \omega(t) \frac{\partial g(\Mat{\theta}, \Mat{c})}{\partial \Mat{\theta}} \left(\sigma_t \nabla \log p_t(\Mat{x}_t | \Mat{y}) - \sigma_t \nabla \log q^{\Mat{\theta}}_t(\Mat{x}_t | \Mat{y}) \right) \right] \\
&\quad -(1 - \lambda)\mean_{t \sim \uniform[0, T], \Mat{c} \sim p_c(\Mat{c}), \Mat{\epsilon} \sim \gauss(\Mat{0}, \Mat{I})} \left[ \omega(t) \frac{\partial g(\Mat{\theta}, \Mat{c})}{\partial \Mat{\theta}} \sigma_t \nabla \log p_t(\Mat{x}_t | \Mat{y})\right] \\
&= \nabla_{\Mat{\theta}} J_{Ent}(\Mat{\theta}, \lambda),
\end{align}
by merging two expectations.
\end{proof}

Further on, our CFG trick implementation of ESD (Eq. \ref{eqn:esd_cfg}) can be regarded as a corollary of Theorem \ref{thm:ent_equal_kl} and Lemma \ref{lem:grad_vsd}:
\begin{theorem}[Classifier-Free Guidance Trick] \label{lem:esd_cfg}
For any $\Mat{\theta}$ and $\lambda \in \real$, $\nabla_{\Mat{\theta}} J_{Ent}(\Mat{\theta}, \lambda)$ equals to the following:
\begin{align}
\begin{split}
-\mean_{t \sim \uniform[0, T], \Mat{c} \sim p_c(\Mat{c}), \Mat{\epsilon} \sim \gauss(\Mat{0}, \Mat{I})} \left[ \omega(t) \frac{\partial g(\Mat{\theta}, \Mat{c})}{\partial \Mat{\theta}} \left(\sigma_t \nabla \log p_t(\Mat{x}_t | \Mat{y}) - \lambda \sigma_t \nabla \log q^{\Mat{\theta}}_t(\Mat{x}_t | \Mat{y}) - (1 - \lambda) \sigma_t \nabla \log q^{\Mat{\theta}}_t(\Mat{x}_t | \Mat{c}, \Mat{y}) \right) \right]
\end{split},
\end{align}
where $\Mat{x}_t = \alpha_t \Mat{x}_0 + \sigma_t \Mat{\epsilon}$, and $\Mat{x}_0 = g(\Mat{\theta}, \Mat{c})$.
\end{theorem}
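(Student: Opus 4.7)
The plan is to present this result as a direct corollary of Theorem~\ref{thm:ent_equal_kl} combined with the gradient formulas for the two KL divergences that appear in the decomposition. The heavy lifting has already been done in Appendix~\ref{sec:proofs_kl}--\ref{sec:proofs_esd}, so the argument reduces to assembling pre-existing pieces in a particular way.

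First, I would invoke Theorem~\ref{thm:ent_equal_kl} to write $J_{Ent}(\Mat{\theta}, \lambda) = \lambda\,\mean_{t}[\Omega(t)\KL(q_t^{\Mat{\theta}}(\Mat{x}_t|\Mat{y})\Vert p_t(\Mat{x}_t|\Mat{y}))] + (1-\lambda)\,\mean_{t,\Mat{c}}[\Omega(t)\KL(q_t^{\Mat{\theta}}(\Mat{x}_t|\Mat{c},\Mat{y})\Vert p_t(\Mat{x}_t|\Mat{y}))] + const.$, which in our notation is $\lambda\,\overline{J_{KL}}(\Mat{\theta}) + (1-\lambda)\,J_{KL}(\Mat{\theta}) + const.$ Then by linearity of the gradient operator, $\nabla_{\Mat{\theta}} J_{Ent}(\Mat{\theta}, \lambda) = \lambda\,\nabla_{\Mat{\theta}} \overline{J_{KL}}(\Mat{\theta}) + (1-\lambda)\,\nabla_{\Mat{\theta}} J_{KL}(\Mat{\theta})$.

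The next step is to substitute the two gradient formulas already established. For the first summand, Lemma~\ref{lem:grad_kl_ii} yields the expression involving the \emph{unconditional} score $\nabla\log q_t^{\Mat{\theta}}(\Mat{x}_t|\Mat{y})$. For the second summand, the key observation is that although Lemma~\ref{lem:grad_kl} gives the bare form of $\nabla_{\Mat{\theta}} J_{KL}$ containing only the pre-trained score, Lemma~\ref{lem:grad_vsd} shows that adding $-\sigma_t\nabla\log q_t^{\Mat{\theta}}(\Mat{x}_t|\Mat{c},\Mat{y})$ inside the expectation leaves the gradient unchanged because its expectation vanishes (the standard ``first-order moment of the score is zero'' identity). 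I therefore choose the VSD-style representation for $\nabla_{\Mat{\theta}} J_{KL}$, which introduces the \emph{conditional} score $\nabla\log q_t^{\Mat{\theta}}(\Mat{x}_t|\Mat{c},\Mat{y})$ that we want.

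Finally, merging the two expectations under a single $\mean[\cdot]$ and collecting the coefficients of $\sigma_t\nabla\log p_t(\Mat{x}_t|\Mat{y})$, $\sigma_t\nabla\log q_t^{\Mat{\theta}}(\Mat{x}_t|\Mat{y})$, and $\sigma_t\nabla\log q_t^{\Mat{\theta}}(\Mat{x}_t|\Mat{c},\Mat{y})$ (which are $1$, $-\lambda$, and $-(1-\lambda)$ respectively) gives the claimed formula. There is no serious technical obstacle here: the only subtlety is the deliberate use of the VSD-form of $\nabla_{\Mat{\theta}} J_{KL}$ in place of the SDS-form, which is what produces the camera-conditioned term required for the classifier-free guidance interpretation. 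Everything else is bookkeeping on expectations, reparameterization $\Mat{x}_t = \alpha_t g(\Mat{\theta},\Mat{c}) + \sigma_t \Mat{\epsilon}$, and linearity.
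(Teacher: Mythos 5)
Your proposal is correct and follows essentially the same route as the paper's proof: decompose $\nabla_{\Mat{\theta}} J_{Ent}(\Mat{\theta},\lambda)$ via Theorem~\ref{thm:ent_equal_kl}, substitute Lemma~\ref{lem:grad_kl_ii} for the $\overline{J_{KL}}$ term and the VSD form of $\nabla_{\Mat{\theta}} J_{KL}$ (Lemma~\ref{lem:grad_vsd}) to introduce the camera-conditioned score, then merge the expectations. You also correctly flag the one non-mechanical choice in the argument---using the VSD-form rather than the SDS-form of $\nabla_{\Mat{\theta}} J_{KL}$---which is the same device the paper relies on.
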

\begin{proof}
By Theorem \ref{thm:ent_equal_kl}, we know that $\nabla_{\Mat{\theta}} J_{Ent}(\Mat{\theta}, \lambda) = \lambda \nabla_{\Mat{\theta}} \overline{J_{KL}}(\Mat{\theta})  + (1-\lambda) \nabla_{\Mat{\theta}} J_{KL}(\Mat{\theta})$. Moreover, by Lemma \ref{lem:grad_vsd}, we have $\nabla_{\Mat{\theta}} J_{KL}(\Mat{\theta}) = \nabla_{\Mat{\theta}} J_{VSD}(\Mat{\theta})$.
As a result, the following can be derived:
\begin{align}
&\nabla_{\Mat{\theta}} J_{Ent}(\Mat{\theta}, \lambda) = \nabla_{\Mat{\theta}} \overline{J_{KL}}(\Mat{\theta}) + (1-\lambda) \nabla_{\Mat{\theta}} J_{VSD}(\Mat{\theta}) \\
&= -\lambda \mean_{t \sim \uniform[0, T], \Mat{c} \sim p_c(\Mat{c}), \Mat{\epsilon} \sim \gauss(\Mat{0}, \Mat{I})} \left[ \omega(t) \frac{\partial g(\Mat{\theta}, \Mat{c})}{\partial \Mat{\theta}} \left(\sigma_t \nabla \log p_t(\Mat{x}_t | \Mat{y}) - \sigma_t \nabla \log q^{\Mat{\theta}}_t(\Mat{x}_t | \Mat{y}) \right) \right] \\
&\quad -(1 - \lambda)\mean_{t \sim \uniform[0, T], \Mat{c} \sim p_c(\Mat{c}), \Mat{\epsilon} \sim \gauss(\Mat{0}, \Mat{I})} \left[ \omega(t) \frac{\partial g(\Mat{\theta}, \Mat{c})}{\partial \Mat{\theta}} \left(\sigma_t \nabla \log p_t(\Mat{x}_t | \Mat{y})  - \sigma_t \nabla \log q^{\Mat{\theta}}_t(\Mat{x}_t | \Mat{c}, \Mat{y}) \right) \right],
\end{align}
as desired after merging two expectations.
\end{proof}

\section{Illustrative Examples}
\label{sec:toy_examples}

\paragraph{Gaussian Distribution Fitting.}
In this section, we provide the necessary details on Fig. \ref{fig:gaussian_example}, where we fit a 2D Gaussian distribution via SDS, VSD, and ESD.
Suppose the targeted Gaussian distribution is $p_0(\Mat{x}_0) = \gauss(\Mat{x}_0 | \Mat{\mu}^*, \Mat{\Sigma}^*)$, where $\Mat{\mu}^* \in \real^{D}$ is the mean vector and $\Mat{\Sigma}^* \in \real^{D 
\times D}$ is the positive definite covariance matrix.
Define differentiable function $g(\{\Mat{b}, \Mat{A}\}, \Mat{c}) = \Mat{b} + \Mat{A}\Mat{c}$, where $\Mat{b}$ and $\Mat{A}$ are parameters to be fitted, and $\Mat{c} \sim \gauss(\Mat{0}, \Mat{I})$ is a random variable sampled from a standard Gaussian distribution.
It is obvious that probability density function of $g(\{\Mat{b}, \Mat{A}\}, \Mat{c})$ is $q_0^{\Mat{b}, \Mat{A}}(\Mat{x}_0) = \gauss(\Mat{x}_0 | \Mat{b}, \Mat{A}\Mat{A}^\top)$.
Our objective is to match $p_0$ and $q_0^{\Mat{b}, \Mat{A}}$ by optimizing parameters $\Mat{b}$ and $\Mat{A}$ with SDS, VSD, and ESD.
Notice that diffusion perturbed $p_0$ and $q_0^{\Mat{b}, \Mat{A}}$ are still Gaussian distributions.
And the score functions used in score distillation can be computed in the closed form:
\begin{align}
& \nabla\log p_t(\Mat{x}_t) = (\alpha_t^2 \Mat{\Sigma}^{*} + \sigma_t^2 \Mat{I})^{-1} (\alpha_t \Mat{\mu}^* - \Mat{x}_t), \\
& \nabla\log q_t^{\Mat{b}, \Mat{A}}(\Mat{x}_t) = (\alpha_t^2 \Mat{A}\Mat{A}^\top + \sigma_t^2 \Mat{I})^{-1} (\alpha_t \Mat{b} - \Mat{x}_t), \quad
\nabla\log q_t^{\Mat{b}, \Mat{A}}(\Mat{x}_t | \Mat{c}) = \frac{\alpha_t (\Mat{b} + \Mat{A}\Mat{c}) - \Mat{x}_t}{\sigma_t^2}.
\end{align}
In our experiments, we run score distillation for 2k steps with 100 warm-up steps.
The learning rate is set to 0.01.
We observe that SDS and VSD have similar convergence behavior as maximal likelihood methods.
When increasing $\lambda$ from 0 to 1, i.e., enhancing the effect of entropy maximization, the fitted distribution gradually covers the targeted support.

\paragraph{2D Image Reconstruction.}
We also conduct 2D image experiments to demonstrate the working mechanism of ESD.
We focus on the image reconstruction task via partial observations \cite{wang2024patch, daras2024ambient}, where the optimized parameters represent a high-resolution image and a random small window of the image will be rendered at each iteration of score distillation.
Formally, let $\Mat{\theta}$ denote the high-resolution image, and $g(\Mat{\theta}, \Mat{m}) = \Mat{\theta} \odot \Mat{m}$, where $\Mat{m}$ is a random binary mask.
We choose $\nabla \log p_t(\Mat{x}_t | \Mat{y})$ as a pre-trained text-to-image diffusion model, and $\Mat{y}$ is the text prompt, specified as ``An astronaut riding a horse in space'' in our experiments.
Akin to \cite{wang2023prolificdreamer}, $\nabla \log q_t(\Mat{x}_t)$ and $\nabla \log q_t(\Mat{x}_t | \Mat{m})$ are fitted via LoRA using cropped images.
During training, we fix training steps as 10k, learning rate as 1e-2 for $\Mat{\theta}$, and 1e-4 for LoRA.
We also apply the cosine learning rate scheduler with 100 warm-up steps.
We present qualitative results in Fig. \ref{fig:image_inpaint_example}. 
It demonstrates that SDS and VSD cause ``Janus''-like problems where each image contains duplicated instances while ESD avoids such issues and generates only one target object.

\begin{figure}[t]
    \centering
    \includegraphics[width=0.5\linewidth]{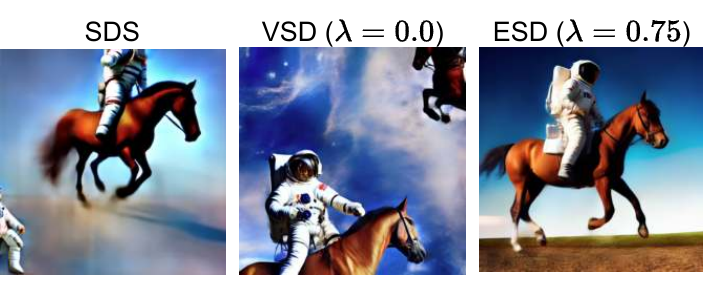}
    \caption{\small \textbf{Image Reconstruction Example.} We leverage SDS, VSD, and ESD to recover a high-resolution 2D image by matching the distribution of its random crops with a pre-trained text-conditioned diffusion model. Prompt: An astronaut riding a horse in space.}
    \label{fig:image_inpaint_example}
    \vspace{-1em}
\end{figure}

\begin{figure*}[t]
    \centering
    \includegraphics[width=\linewidth]{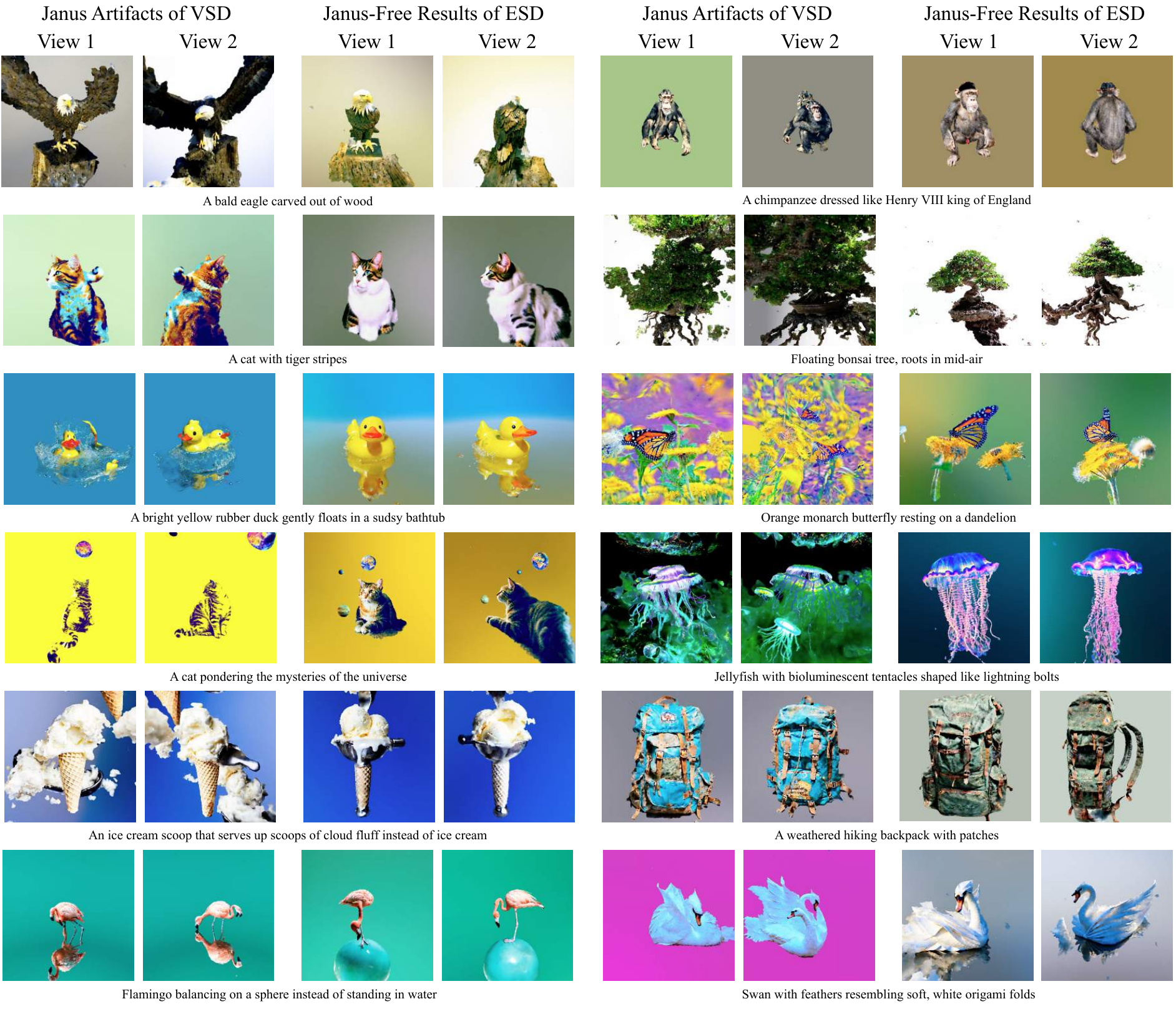}
    \vspace{-1em}
    \caption{\small \textbf{More Qualitative Results.} We present the two views of each object synthesized by VSD (ProlificDreamer) and our method, respectively. Best view in an electronic copy.}
    \label{fig:more_res}
    \vspace{-1em}
\end{figure*}

\section{Experiment Details}
\label{sec:expr_details}

In this section, we provide more details on the implementation of ESD and the compared baseline methods.
All of them are implemented under the \texttt{threestudio} framework and include three stages: coarse generation, geometry refinement, and texture refinement, following \cite{wang2023prolificdreamer}.
For the coarse generation stage, we adopt foreground-background disentangled hash-encoded NeRF \cite{muller2022instant} as the underlying 3D representation, and DMTet \cite{shen2021deep} for two refinement stages.
All scenes are trained for 25k steps for the coarse stage, 10k steps for geometry refinement, and 30k steps for texture refinement, for the sake of fair comparison.
At each iteration, we randomly render one view (i.e., batch size equals one).
We progressively adjust rendering resolution: within the first 5k steps, we render at 64$\times$64 resolution and increase to 256$\times$256 resolution afterward.

\paragraph{SDS \cite{poole2022dreamfusion}.}
Following the original paper, we set the CFG weight to 100.
Additionally, we encourage sparsity of the density field and penalize the mismatch between orientation and predicted normal maps.
Lighting augmentation is also enabled for SDS.
The geometry refinement step is directly borrowed from VSD: a DMTet is initialized by NeRF's density field via marching cube while the end-to-end optimization with SDS is then conducted on the geometry representation for both geometry and texture.

\paragraph{VSD \cite{wang2023prolificdreamer}.}
We reuse the standard setting of VSD for all three stages.
In particular, we fix the CFG coefficient to 7.5 and only use single-particle VSD, conforming with our theoretical analysis.
During the geometry refinement stage, we adopt SDS guidance instead of VSD.

\paragraph{Debiased-SDS \cite{hong2023debiasing}.}
Our implementation of Debiased-SDS is built upon SDS.
We enable both score debiasing and prompt debiasing.
For score debiasing, we follow the default setting and linearly increase the absolute threshold for gradient clipping from 0.5 to 2.0.
All other hyperparameters follow from SDS.

\paragraph{Perp-Neg \cite{armandpour2023re}.}
Perp-Neg implementation is based on SDS as well.
As suggested by the original paper, in positive prompts, we leverage weights $r_{interp} = 1 - 2|\text{azimuth}| / \pi$ for front-side prompt interpolation and $r_{interp} = 2 - 2|\text{azimuth}| / \pi$ for side-back interpolation. 
In negative prompts, the interpolating function is chosen as the shifted exponential function $\alpha \exp(-\beta r_{interp}) + \gamma$.
Specifically, we choose $\alpha_{sf} = 1, \beta_{sf} = 0.5, \gamma_{sf} = -0.606, \alpha_{fsb} = 1, \beta_{fsb} = 0.5, \gamma_{fsb} = 0.967, \alpha_{fs} = 4, \beta_{fs} = 0.5, \gamma_{fs} = -2.426, \alpha_{sf} = 4, \beta_{sf} = 0.5, \gamma_{sf} = -2.426$.
See \cite{armandpour2023re} for more details on the meaning of these hyperparameters.

\paragraph{ESD.}
Our ESD implementation is similar to VSD.
We leverage the extrinsics matrix ($4 \times 4$) as the camera pose embedding, and condition the diffusion model by replacing its class embedding branch.
We introduce CFG trick to linearly mix camera-conditioned and unconditioned score functions fine-tuned with rendered images.
We find CFG 0.5 generally yields desirable results.
We also set the probability of unconditioned training to 0.5.
In particular, view-dependent prompting is disabled for the fine-tuned score function.

\section{More Qualitative Results}
\label{sec:more_vis_res}

In this section, we present more qualitative results in Fig. \ref{fig:more_res}.
All text prompts are generated by GPT-4V and directly picked from \cite{wu2024gpt}.
We mainly compare ESD with VSD to highlight the influence of the entropy regularization term.
The observation is consistent with our main text.
The outcomes of VSD often exhibit broken geometries, duplicated objects, and multiple signature views, which contradict the inherent characteristics of the generated subjects.
ESD can effectively mitigate ``Janus'' issues and generate more realistic contents.

\section{Numerical Evaluation}
\label{sec:full_metrics}

\paragraph{Human Evaluation Criteria.}
In our human evaluation of \underline{S}uccessful generation \underline{R}ate (SR), a text prompt is labeled as ``successfully generated'' if at least one of three random seeds yields a generation satisfying the criteria: (\cite{armandpour2023re}, Appendix A.2):

\begin{enumerate}[leftmargin=2\parindent]
\item The rendered images show the requested object(s), which is positioned on the correct view.
\item The rendered images do not show hallucination including counterfactual details, for example, a panda has three ears.
\item The rendered images do not have unrealistic color or texture or massive floaters.
\end{enumerate}

\paragraph{Extension of Tab. \ref{tab:metric}.}
In Tab. \ref{tab:full_metrics}, we include standard deviations of all numerical results presented in Tab. \ref{tab:metric}.
We note that ESD exhibits a smaller variance of different metrics, indicating its training might be well-regularized and more robust.

\begin{table}[h]
\centering
\vspace{-0.5em}
\caption{\small \textbf{Quantitative Comparisons.} $(\downarrow)$ means the lower the better, and $(\uparrow)$ means the higher the better.}
\label{tab:full_metrics}
\vspace{-0.5em}
\begin{tabular}{c|cccccc}
\toprule
& CLIP ($\downarrow$) & FID ($\downarrow$) & IQ ($\downarrow$) & IV ($\uparrow$) & IG ($\uparrow$) & SR ($\uparrow$) \\
\hline
SDS \citep{poole2022dreamfusion}  & 0.737\textpm0.068 & 291.860\textpm61.242 & 4.295\textpm0.419 & \textbf{4.8552\textpm0.342} & 0.123\textpm0.053 & 15.00\% \\
VSD \citep{wang2023prolificdreamer} & 0.725\textpm0.072 & 265.141\textpm58.549 & 3.149\textpm1.234 & 3.5712\textpm1.345 & 0.137\textpm0.061 & 19.17\% \\
ESD (Ours) & \textbf{0.714\textpm0.065} & \textbf{235.915\textpm56.558} & \textbf{3.135\textpm1.088} & 4.0314\textpm1.285 & \textbf{0.327\textpm0.185} & \textbf{55.83\%} \\
\bottomrule
\end{tabular}
\vspace{-1em}
\end{table}

\paragraph{Breakdown Table.}
We provide a breakdown table to present quantitative evaluations of results in Fig. \ref{fig:res_janus}.
The numbers are reported in Tab. \ref{tab:breakdown}.
The conclusion is consistent with our argument in Sec. \ref{sec:expr}.
Our ESD consistently outperforms all the compared baseline methods, especially in FID and IG.
This implies that ESD effectively boosts the view diversity and accurately matches the distribution between pre-trained image distribution and rendered image distribution.

\begin{table}[h]
\vspace{-0.5em}
\centering
\caption{\small \textbf{Quantitative Comparisons.} $(\downarrow)$ means the lower the better, and $(\uparrow)$ means the higher the better.}
\label{tab:breakdown}
\vspace{-0.5em}
\resizebox{0.9\linewidth}{!}{
\begin{tabular}{c|ccccc|ccccc}
\toprule
& CLIP ($\downarrow$) & FID ($\downarrow$) & IQ ($\downarrow$) & IV ($\uparrow$) & IG ($\uparrow$) & CLIP ($\downarrow$) & FID ($\downarrow$) & IQ ($\downarrow$) & IV ($\uparrow$) & IG ($\uparrow$) \\
\hline
& \multicolumn{5}{c|}{\small Michelangelo style statue of dog reading
news on a cellphone} & \multicolumn{5}{c}{\small A rabbit, animated movie character, high detail 3d model} \\
\hline
SDS \cite{poole2022dreamfusion}  & 0.694 & 365.304 & 4.469 & \textbf{5.119} & 0.145
& \textbf{0.712} & 200.084 & 4.365 & \textbf{4.970} & \textbf{0.138} \\
VSD \cite{wang2023prolificdreamer} & 0.758 & 296.168 & \textbf{2.514} & 3.041 & 0.209 
& 0.720 & 150.120 & \textbf{1.083} & 1.173 & 0.083 \\
Debiased-SDS \cite{hong2023debiasing} & 0.778 & 351.493 & 4.058 & 4.814 & 0.186
& 0.735 & 216.058 & 4.443 & 4.857 & 0.093 \\
Perp-Neg \cite{armandpour2023re} & 0.793 & 306.918 & 3.970 & 4.572 & 0.151
& 0.727 & 176.279 & 2.453 & 2.665 & 0.086 \\
ESD (Ours) & \textbf{0.685} & \textbf{292.716} & 2.523 & 4.080 & \textbf{0.617}
& 0.725 & \textbf{149.763} & 1.385 & 1.567 & 0.132 \\
\hline
& \multicolumn{5}{c|}{\small A rotary telephone carved out of wood} & \multicolumn{5}{c}{\small A plush dragon toy} \\
\hline
SDS \cite{poole2022dreamfusion}  & 0.853 & 309.929 & 3.478 & 4.179 & 0.202
 & 0.889 & 243.984 & 4.622 & \textbf{5.008} & 0.084 \\
VSD \cite{wang2023prolificdreamer} & 0.855 & 305.920 & 3.469 & 4.214 & 0.214
& 0.821 & 273.495 & \textbf{4.382} & 4.728 & 0.078 \\
Debiased-SDS \cite{hong2023debiasing} & 0.927 & 313.893 & 4.098 & 4.201 & 0.025
& 0.878 & 262.474 & 4.827 & 4.954 & 0.026 \\
Perp-Neg \cite{armandpour2023re} & 0.868 & 308.554 & 3.488 & 4.021 & 0.153
& 0.839 & 309.276 & 4.691 & 4.816 & 0.027 \\
ESD (Ours)  & \textbf{0.846} & \textbf{299.578} & \textbf{3.332} & \textbf{4.439} & \textbf{0.366}
& \textbf{0.815} & \textbf{237.518} & 4.436 & 4.971 & \textbf{0.121} \\
\bottomrule
\end{tabular}}
\vspace{-1em}
\end{table}

\section{Limitations and Failure Cases}
\label{sec:fail_case}

We note that by Theorem \ref{thm:ent_equal_kl}, ESD still optimizes for a mode-seeking KL divergence.
This suggests that ESD may still lead to mode collapse especially when the target image distribution is overly concentrated on one peak \citep{salimans2016improved}.
Careful tuning of $\lambda$ is also necessary to balance the per-view sharpness/details and cross-view diversity.
It also remains open whether ESD can further benefit multi-particle VSD or amortized text-to-3D training \citep{lorraine2023att3d}.

Below we present a failure case produced by our ESD in Fig. \ref{fig:failure_case}, where the back view of the marble still contains a mouse face while the side views exhibit duplicate ears.
We point out that even though ESD can encourage diversity among views, however, it may still incline to one mode when the target image distribution is overwhelmingly concentrated at one point.
The specified text prompt in Fig. \ref{fig:failure_case} is in this case as we observe the majority of sampled images from a pre-trained diffusion model with the corresponding prompt are the frontal views of a marble mouse.

\begin{figure*}[b]
    \centering
    \vspace{-2em}
    \includegraphics[width=0.5\textwidth]{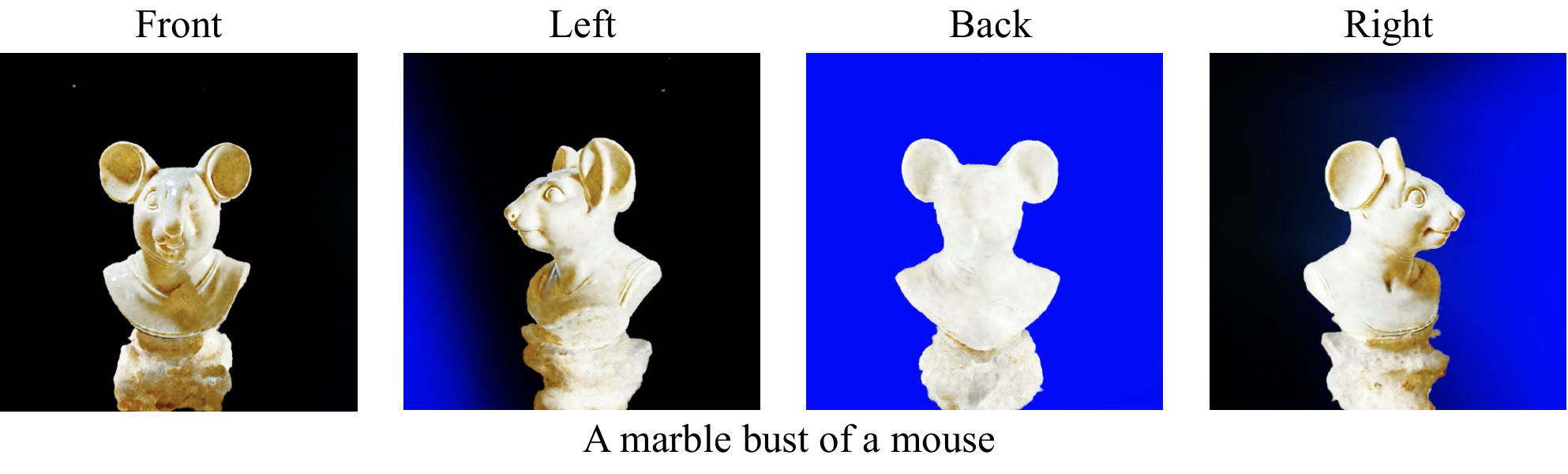}
    \vspace{-1em}
    \caption{\small \textbf{Failure case.} We present four views of a failure case yielded by ESD with prompt ``A marble bust of a mouse'' and CFG weight $\lambda = 0.5$.}
    \label{fig:failure_case}
    \vspace{-2em}
\end{figure*}

\end{document}